\crefname{figure}{Figure}{Figures}
\newcommand{\ignore}[1]{}
\theoremstyle{plain}
\newtheorem{theorem}{Theorem}
\newtheorem{lemma}[theorem]{Lemma}
\newtheorem*{theorem*}{Theorem}
\newtheorem*{lemma*}{Lemma}
\newtheorem*{corollary*}{Corollary}
\newtheorem*{proposition*}{Proposition}
\newtheorem*{claim*}{Claim}
\newtheorem*{fact*}{Fact}
\theoremstyle{definition}
\newtheorem*{definition*}{Definition}
\newtheorem*{remark*}{Remark}
\newtheorem*{example*}{Example}
\theoremstyle{plain}
\newtheorem*{theoremaux}{\theoremauxref}
\gdef\theoremauxref{1}
\DeclareMathAlphabet{\mathbfsf}{\encodingdefault}{\sfdefault}{bx}{n}
\let\Pr\relax
\DeclareMathOperator{\Pr}{\mathbb{P}}
\newcommand{\lr}[1]{\!\left(#1\right)\!}
\newcommand{\lrbra}[1]{\!\left[#1\right]\!}
\newcommand{\set}[1]{\{#1\}}
\newcommand{\wt}[1]{\smash{\widetilde{#1}}}
\renewcommand{\O}{O}
\newcommand{\tO}{\wt{\O}}
\newcommand{\tTheta}{\wt{\Theta}}
\newcommand{\E}{\mathbb{E}}
\newcommand{\EE}[1]{\E\lrbra{#1}}
\newcommand{\D}{\mathcal{D}}
\newcommand{\Q}{\mathbb{Q}}
\newcommand{\F}{\mathcal{F}}
\newcommand{\st}{\star}
\newcommand{\half}{\frac{1}{2}}
\newcommand{\eq}{~=~}
\let\nablaold\nabla
\renewcommand{\nabla}{\nablaold\mkern-2.5mu}
\newcommand{\KL}[2]{D_{\mathrm{KL}}\lr{#1 \,\middle\|\, #2}}
\newcommand{\Regret}{\mathrm{R}_T}
\newcommand{\TV}[2]{D_{\mathrm{TV}}\lr{#1 \,,\, #2}}
\title{Online Learning with Feedback Graphs Without the Graphs}
\author{
  Alon Cohen \qquad Tamir Hazan \qquad Tomer Koren\\
  {\normalsize Technion---Israel Institute of Technology}\\
  \texttt{\normalsize \{alon.cohen,tamir.hazan,tomerk\}@technion.ac.il}
}
\begin{document} 

\maketitle

\begin{abstract}
We study an online learning framework introduced by \citet{MS11} in which the feedback is specified by a graph, in a setting where the graph may vary from round to round and is \emph{never fully revealed} to the learner.
We show a large gap between the adversarial and the stochastic cases. 
In the adversarial case, we prove that even for dense feedback graphs, the learner cannot improve upon a trivial regret bound obtained by ignoring any additional feedback besides her own loss.
In contrast, in the stochastic case we give an algorithm that achieves $\tTheta(\sqrt{\alpha T})$ regret over $T$ rounds, provided that the independence numbers of the hidden feedback graphs are at most $\alpha$.
We also extend our results to a more general feedback model, in which the learner does not necessarily observe her own loss, and show that, even in simple cases, concealing the feedback graphs might render a learnable problem unlearnable.
\end{abstract}

\section{Introduction}

Online learning is a general framework for sequential decision-making under uncertainty. In its most basic form, it can be described as follows. 
A learner has to iteratively choose an action from a set of $K$ available actions, and suffer a loss associated with that action. 
The losses of the actions on each round are assigned in advance by an arbitrary, possibly adversarial, environment.
The learner's goal is to minimize her regret over~$T$ rounds of the game, which is the difference between her cumulative loss and that of the best fixed action in hindsight.

After making each decision, the learner receives some form of feedback about the losses. Traditionally, the literature considers two types of feedback: full feedback \citep{littlestone1994weighted, vovk1990aggregating,cesa1997use}, where the learner observes the losses associated with all of her possible actions, and bandit feedback \cite{AuerCeFrSc02}, where the learner only observes the loss of the action she has actually taken.

Full feedback and bandit feedback are special cases of a general framework introduced by \citet{MS11},
in which the feedback model is specified by a sequence~$G_1,\ldots,G_T$ of \emph{feedback graphs}, one for each round $t$ of the game.
Each feedback graph $G_t$ is a directed graph whose nodes correspond to the learner's $K$ possible actions; a directed edge $u \rightarrow v$ in this graph
indicates that whenever the learner chooses action~$u$ on round $t$, in addition to observing the loss of action~$u$, she also gets to observe the loss associated with the action~$v$ on that round. 

Online learning with feedback graphs was further studied by several authors.
\citet{Alonetal2013}, and subsequently \citet{kocak2014efficient,AlonCDK15}, gave regret-minimization algorithms that achieve $\tO(\sqrt{\alpha T})$ regret, where $\alpha$ is a bound on the \emph{independence numbers} of the graphs $G_1,\ldots,G_T$.
Up to logarithmic factors, their results recover and interpolate between the classic bounds of $\O(\sqrt{T \log K})$ with full feedback \cite{freund1997decision} and $\O(\sqrt{KT})$ with bandit feedback \cite{AuerCeFrSc02,AudibertB09}.
The $\tO(\sqrt{\alpha T})$ bound turns out to be tight for \emph{any} feedback graph (when it is fixed throughout the game and known in advance), in light of a matching lower bound due to \citet{MS11}.

However, all of the optimal algorithms mentioned above require the full structure of the feedback graph in order to operate. 
While some require the entire graph $G_t$ for performing their updates only at the end of round $t$ (e.g., \citealp{Alonetal2013, kocak2014efficient, AlonCDK15}),%
\footnote{More precisely, these algorithms do not need the entire graph but rather the incoming neighborhood of each of the actions for which the associated loss has been observed.}
others actually need the description of $G_t$ at the beginning of the round before making their decision (e.g., \citealp{AlonCGMMS14}).
In fact, none of the algorithms previously proposed in the literature is able to provide non-trivial regret guarantees without the feedback graphs being disclosed.

The assumption that the entire observation system is revealed to the learner on each round, even if only after making her prediction, is rather unnatural.
In principle, the learner need not be even aware of the fact that there is a graph underlying the feedback model; the feedback graph is merely a technical notion  for us to specify a set of observations for each of the possible actions.
Ideally, the only signal we would like the learner to receive following each round is the set of observations that corresponds to the action she has taken on that round (in addition to her own loss).

As a motivating example for situations where receiving the entire observation system is unrealistic, consider the following online pricing problem that faces any vendor selling goods over the internet. 
On each round, the seller has to announce a price for his product. 
Then, a buyer arrives and decides whether or not to purchase the product at this price based on his private value; the only feedback the seller receives is whether or not the buyer purchased the product at the announced price.
However, when a purchase takes place, the seller also knows that the buyer would have bought the product at any price lower than the price that she announced.
While this feedback structure can be thought of as a directed graph over the seller's actions (i.e., prices), the graph itself is never fully revealed to the seller as its structure discloses the buyer's private value.

\subsection{Our contributions}

In this paper, we study online learning with feedback graphs in a setting where the feedback graphs are \emph{never revealed} to the learner in their entirety. That is, in this setting the only feedback available to the learner at the end of round $t$ is the out-neighborhood of her chosen action in the graph~$G_t$, along with the loss associated with each of the actions in this neighborhood and the loss of the action that she chose.
We address the following questions: how this lack of full disclosure affects the learner's regret?
Is it possible to achieve any non-trivial regret guarantee in this setting, i.e., one that improves on the trivial $\O(\sqrt{KT})$ bound?
In particular, can we obtain bounds that scale with the independence numbers of the feedback graphs?

Our main results show that not knowing the entire feedback graphs can have a significant impact on the learner's achievable regret. 
First, we show that in a standard adversarial online learning setting, where we assume nothing about the process generating the losses and the feedback graphs (i.e., both are possibly chosen by an adversary), any strategy of the learner must suffer $\Omega(\sqrt{KT})$ regret in the worst case, even if the independence numbers of $G_1,\ldots,G_T$ are all bounded by some small absolute constant.
Namely, by hiding the feedback graphs from the learner,
the problem surprisingly becomes as hard as the $K$-armed bandit problem, even when the feedback available to the learner is ``almost full'': each of the feedback graphs is ``almost a clique.''
In other words, the side observations received by the learner are effectively useless; she may as well ignore them and use a standard bandit algorithm such as \textsc{Exp3}~\cite{AuerCeFrSc02} to perform optimally.

Second, and in contrast to the adversarial setting, we show that in a stochastic setting where the losses of each action are known to be drawn i.i.d.~from some unknown probability distribution, side observations can still be very useful. 
We show that the learner is able to achieve an optimal regret bound of the form $\tO(\sqrt{\alpha T})$, even if the graphs $G_1,\ldots,G_T$ are chosen adversarially and are never fully revealed to the learner, as long as their independence numbers are all bounded by $\alpha$.
We give an efficient elimination-based algorithm achieving this bound, that does not require knowing the value of $\alpha$ in advance.
This result is optimal up to logarithmic factors, even when the feedback graph is fixed throughout the game and known in advance, due to a lower bound of \citet{MS11}.

For our algorithm in the stochastic case, we also prove a distribution-dependent regret bound that scales logarithmically with $T$.
The bound we prove is of the form $\O(\sum_{v \in V'} (1/\Delta_v) \log T)$, where $\Delta_v$ is the gap of action $v$, and $V'$ is the subset of $\tO(\alpha)$ actions with smallest gaps.
%
This bound is a substantial improvement over standard regret bounds of stochastic multi-armed bandit algorithms such as UCB~\cite{auer2002finite}:
whereas the regret of the latter algorithms is typically bounded by a sum $\sum_{v \in V} (1/\Delta_v)$ taken over \emph{all} $K$ actions, the sum in our bound is taken only over the subset of $\tO(\alpha)$ actions with the smallest gaps.
Again, this result cannot be improved even when the feedback graph is fixed throughout the game, and has an optimal dependence on $\alpha$ as well as on the gaps $\Delta_v$, thus resolving an open question of \citet{AlonCGMMS14}.

Finally, we extend our results to a more general feedback model recently studied by \citet{AlonCDK15}, in which the learner does not necessarily observe her own loss after making predictions (namely, each action may or may not have a self-loop in each feedback graph).
\citet{AlonCDK15} gave a necessary and sufficient condition for attaining $\Theta(\sqrt{T})$ regret in this more general model---a graph-theoretic condition they call strong observability.
The extension of our results to their model bears some surprising consequences: for example, even in the strongly observable case with only two actions, not revealing the entire feedback graphs to the learner might make the problem unlearnable!
Nevertheless, in the case of stochastic losses, our positive results do extend to the more general feedback model.

\subsection{Additional related work}

Online learning with feedback graphs was previously considered in the stochastic setting by \citet{caron2012leveraging}, who gave results depending on the graph clique structure.
Their analysis, however, only applies when the feedback graph is fixed throughout the game, and can only bound the regret in terms of a quantity akin to the clique-partition number of this graph, which is always larger than its independence number (the gap between the two can be very large; see \citealp{AlonCGMMS14}).

More recently, \citet{wu2015online} and \citet{kocak2016online} have investigated a noisy version of the feedback graph model, where feedback is specified by a weighted directed graph with edge weights indicating the quality (e.g., the noise level or variance) of the feedback received on adjacent vertices.
\citet{wu2015online} provided finite-time problem-dependent lower bounds for this setting; \citet{kocak2016online} generalized the notion of independence number to the noisy case and gave new efficient algorithms in this setting.

\section{Setup and Main Results}


We consider a general online learning model with graph-structured feedback, which can be described as a game between a learner and an environment that proceeds for $T$ rounds. 
Before the game begins, the environment privately determines a sequence of loss 
functions $\ell_1,...,\ell_T : V \mapsto [0,1]$ defined over a set $V = \set{1,...,K}$ of $K$ actions, which we view as a sequence of loss vectors $\ell_1,...,\ell_T \in [0,1]^K$. 
In addition, the environment fixes a sequence of directed graphs $G_1,\ldots,G_T$ over $V$ as vertices.

We will consider two different cases, that we refer to as the adversarial setting and the stochastic setting:
\begin{itemize}
\item
In the adversarial setting, the loss vectors $\ell_1,...,\ell_T$ and the feedback graphs $G_1,\ldots,G_T$ are chosen by the environment in an arbitrary way. 
\item
In the stochastic setting, the environment privately selects a loss distribution $\D$ over $[0,1]^K$ and an arbitrary sequence $G_1,\ldots,G_T$;
thereafter, the loss vectors $\ell_1,\ldots,\ell_T$ are sampled i.i.d.~from~$\D$. An important property of this setting is that the loss vectors are \emph{statistically independent} from the feedback graphs.
\end{itemize}

Iteratively on rounds~$t = 1,2,...,T$, the learner randomly chooses an action $v_t \in V$ and incurs the loss $\ell_t(v_t)$. 
At the end of each round $t$, the learner receives a feedback comprised of $\set{ ( v, \ell_t(v) ) : (v_t \rightarrow v) \in G_t) }$, that includes the loss $\ell_t(v_t)$ incurred by the learner (i.e., we assume that $(v \rightarrow v) \in G_t$ for all $t$ and $v \in V$).
In words, the learner observes the losses associated with $v_t$ and the actions in the out-neighborhood of $v_t$ in the feedback graph~$G_t$. 
The feedback graph $G_t$ itself \emph{is never revealed} in its entirety to the learner.

The goal of the learner throughout the $T$ rounds of the game is to minimize her expected regret, which is defined as 
\begin{align}
\label{eq:regret}
\Regret 
\eq 
\EE{ \sum_{t=1}^T \ell_t(v_t) \,-\, \sum_{t=1}^T \ell_t(v^\star) } ~,
\end{align}
where $v^\st = \min_{v \in V} \E[\sum_{t=1}^T \ell_t(v)]$ is the best action~in hindsight. Here, the expectations are taken over the random choices of the learner and, in the stochastic setting, also over the randomness of the losses.

For the stochastic setting we require additional notation.
For each $v \in V$, we denote by $\mu(v)$ the mean of the loss of action $v$ under $\D$. 
We denote $\mu^\st = \mu(v^\st)$, and let $\Delta_v = \mu(v) - \mu^\star$ for all $v \in V$. 
We refer to~$\Delta_v$ as the \emph{gap} of action $v$, and assume for simplicity that $v^\star$ is unique so that $\Delta_v > 0$ for all $v \ne v^\star$.


For stating our results, we need a standard graph-theoretic definition.
An \emph{independent set} in a graph $G=(V,E)$ (either directed or undirected) is a set of vertices that are not connected by any edges. Namely, $S \subseteq V$ is independent if for any $u,v \in S$, $u \neq v$, it holds that $(u \rightarrow v) \notin E$. The \emph{independence number} $\alpha(G)$ of $G$ is the size of the largest independent set in~$G$.

\subsection{Main results}

We now state the main results of this paper.
Our first result deals with the adversarial case and shows that when the feedback graphs are not revealed to the learner at the end of each round, her regret might be very large even when the independence numbers of the graphs are small---they are all bounded by a constant.

\begin{theorem}
\label{thm:main-lower}
In the adversarial setting, any online learning algorithm must suffer at least~$\Omega(\sqrt{KT})$ regret in the worst case, even when all feedback graphs $G_1,\ldots,G_T$ have independence numbers $\le O(1)$.
\end{theorem}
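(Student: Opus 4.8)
The plan is to construct a distribution over problem instances (losses and feedback graphs) on which any learner must incur $\Omega(\sqrt{KT})$ regret, via Yao's minimax principle — it suffices to exhibit a single randomized environment that defeats every deterministic learner. The core idea is to embed a hard $K$-armed bandit instance while simultaneously using dense feedback graphs (small independence number) whose structure is correlated with the hidden ``good'' action, so that the side observations the learner actually receives convey no useful information about which action is best. The target lower bound $\Omega(\sqrt{KT})$ is exactly the classical bandit lower bound, so the conceptual goal is to argue that hiding the graphs collapses this enriched feedback model back to pure bandit feedback from an information-theoretic standpoint.

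First I would fix a ``planted'' construction: choose a uniformly random special action $v^\star \in V$, and set the loss of each action on each round to be an independent $\mathrm{Bernoulli}(\thalf)$, except that $v^\star$ has a slightly smaller mean $\thalf - \eps$ for an appropriately tuned $\eps \asymp \sqrt{K/T}$. This is the standard needle-in-a-haystack gap that forces $\Omega(\sqrt{KT})$ regret under bandit feedback. Next I would design the feedback graphs $G_t$ so that each has small independence number (nearly a clique, say $\alpha(G_t) = O(1)$) yet the \emph{out-neighborhood} revealed after playing any action $v_t$ never lets the learner ``triangulate'' $v^\star$. The natural device is to make the out-neighborhood of each vertex $v$ omit exactly the information that would distinguish $v^\star$ from the rest — for instance, arrange that observing the losses on $N^{\mathrm{out}}(v_t)$ gives the learner the same statistical signal regardless of the identity of $v^\star$, except through the loss of $v_t$ itself. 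Because the learner never sees the graph, she cannot infer $v^\star$ from the graph's structure either; the randomness in $v^\star$ must be correlated with the graph so that the graph alone also leaks nothing.

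The key analytic step is a KL-divergence / information argument in the style of the classical \textsc{Exp3} lower bound: I would compare the distribution of the learner's observed transcript under the ``uniform'' environment (all means $\thalf$) with its distribution under the environment where $v^\star$ is planted, bound the total-variation distance between these via Pinsker's inequality and a chain-rule decomposition of the KL divergence over the $T$ rounds, and show that on rounds where the learner does not play $v^\star$ the observed feedback contributes essentially no divergence. This reduces the problem to counting how often the learner can play $v^\star$, leading to the $\Omega(\sqrt{KT})$ bound after optimizing $\eps$. The crucial design constraint — and the main obstacle — is to simultaneously satisfy both of the competing requirements on $G_t$: the graph must be \emph{dense} (small independence number, so that under full disclosure the problem would be easy), yet the \emph{realized} out-neighborhood feedback must carry no more information about $v^\star$ than a single Bernoulli sample of $v_t$'s own loss.

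The hardest part will be engineering this feedback so that the side observations are genuinely uninformative despite the graph being almost a clique. One clean way to achieve it is to let the loss vector and the planted identity interact so that, conditioned on the learner's action and own loss, the additional observed coordinates are distributed identically in the planted and uniform worlds — effectively making the extra observations a deterministic or $v^\star$-independent function of what the learner already sees. Verifying that such a construction admits feedback graphs with $\alpha(G_t) \le O(1)$, while keeping the per-round KL contribution from the side observations at zero, is the delicate combinatorial-probabilistic core; once that coupling is in place, the remaining regret computation follows the standard multi-armed-bandit lower-bound template and yields the stated $\Omega(\sqrt{KT})$.
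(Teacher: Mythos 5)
Your overall architecture matches the paper's: Yao's principle, a planted $v^\star$ with mean $\tfrac12-\eps$ where $\eps=\Theta(\sqrt{K/T})$, feedback graphs correlated with the realized losses so that the side observations are uninformative, and a Pinsker/chain-rule KL argument reducing everything to the number of pulls of $v^\star$. However, there is a genuine gap: the one idea that makes the whole argument work --- the explicit joint distribution of edges and losses --- is precisely the piece you leave as a to-do (``the delicate combinatorial-probabilistic core''). Your suggestion to make the side observations ``a deterministic or $v^\star$-independent function of what the learner already sees'' does not work as stated: if the extra coordinates were determined by the learner's own observations they would indeed carry no information, but then the feedback would not correspond to a dense graph over genuinely distinct loss values; and if they are genuine Bernoulli samples of other actions' losses, they \emph{do} reveal $v^\star$ unless edge presence is carefully coupled to the loss value. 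The paper's resolution is a censoring trick: every edge $u\to v$ with $v\ne v^\star$ is present with probability $1-2\eps$ independently of everything, while an edge $u\to v^\star$ is present with probability $1$ when $\ell_t(v^\star)=1$ and with probability $(1-2\eps)/(1+2\eps)$ when $\ell_t(v^\star)=0$. A short computation shows that for every vertex $v$, special or not, the observable outcomes (``edge present and loss $=0$'', ``edge present and loss $=1$'', ``edge absent'') have probabilities $(\tfrac12-\eps,\;\tfrac12-\eps,\;2\eps)$, so the transcript distributions coincide across hypotheses except through direct pulls of $v^\star$; the learner cannot exploit the asymmetry because she never sees the loss when the edge is censored. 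Without exhibiting such a coupling and verifying that the per-round KL contribution of the side observations is actually zero, the reduction to the bandit lower bound remains an aspiration rather than a proof.

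A second, smaller omission: the graphs in this construction are random, so the claim $\alpha(G_t)\le O(1)$ must itself be proved and then reconciled with the theorem statement, which requires the adversary to use only graphs of constant independence number. The paper shows, via a union bound over candidate independent sets in the Erd\H{o}s--R\'enyi-style undirected graph where each edge is present with probability $1-4\eps^2$, that all $T$ graphs have independence number at most $9$ except with probability $\eps/8$, and then conditions the environment on this event, paying an extra $(\eps/8)\,T=\sqrt{KT}/64$ in regret. Your proposal asserts the graphs are ``nearly a clique'' without verifying this for the specific correlated construction, and does not address the conditioning step.
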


The lower bound in the theorem is tight: it can be matched by simply running a standard bandit algorithm (e.g., \textsc{Exp3}, \citealp{AuerCeFrSc02}), ignoring all observed feedback besides the loss of the action played.

Our next result shows that in the stochastic case, the learner is still able to attain non-trivial regret despite the fact that the feedback graphs are never fully revealed to her.
\begin{theorem}
\label{thm:main-upper}
In the stochastic setting, \cref{alg:stochastic} described in \cref{sec:stoch} attains an expected regret of at most $\tO(\sqrt{\alpha T})$, provided that the independence numbers of the graphs $G_1,\ldots,G_T$ are all bounded by $\alpha$.
\end{theorem}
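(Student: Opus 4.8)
The plan is to analyze \cref{alg:stochastic} as a successive-elimination scheme run in doubling phases $r = 1, 2, \ldots$ with target accuracy $\eps_r = 2^{-r}$. I maintain a set of active actions, initialized to $V$. Within phase $r$ I collect $n_r = \Theta(\eps_r^{-2}\log T)$ \emph{observations} of every active action, gathering them by repeatedly playing a uniformly random action drawn from the \emph{pool} of active actions that have so far been observed fewer than $n_r$ times, and removing an action from this pool the moment it reaches $n_r$ observations; the phase ends once the pool is empty. I then form the empirical loss mean $\hat\mu_r(v)$ of each active $v$ and eliminate any $v$ for which $\hat\mu_r(v) - \hat\mu_r(w) > 2\eps_r$ for some active $w$. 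Since the phase lengths are governed by an observable stopping condition, the scheme never needs to know $\alpha$.

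For correctness of elimination I use that the losses are i.i.d.\ and statistically independent of the graphs, so each of the $n_r$ observations of an action is a fresh independent sample of its loss. A Hoeffding bound together with a union bound over actions and phases then shows that, with probability $1 - \poly(1/T)$, every empirical mean satisfies $|\hat\mu_r(v) - \mu(v)| \le \eps_r$ simultaneously. On this event the rule never discards $v^\st$ and discards every action with gap $\Delta_v > 4\eps_r$ by the end of phase $r$; hence every action still active during phase $r$ has gap $O(\eps_r)$. The complementary failure event contributes only $O(1)$ to the expected regret and is ignored henceforth.

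The heart of the argument---and the step I expect to be the main obstacle---is bounding the \emph{number of rounds} (equivalently, plays) consumed by a phase. I will show that for oblivious graphs with independence number at most $\alpha$, the pool-sampling routine brings all $m$ active actions to $n$ observations within $\tO(\alpha n)$ rounds in expectation, \emph{independently of} $m$. The quantitative input is that a single graph with a self-loop at every vertex and independence number at most $\alpha$ is dense: restricted to any pool $U$ of size $u$, a Tur\'an-type bound---equivalently the feedback-graph inequality $\sum_{v\in U} 1/|N^{\mathrm{in}}(v)\cap U| = \tO(\alpha)$ together with AM--HM---forces $\Omega(u^2/\alpha)$ internal directed edges, so playing a uniformly random action of $U$ delivers in expectation $\Omega(u/\alpha)$ observations to actions \emph{still} in the pool. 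I then bucket the execution by the dyadic scale $u \in (2^{j-1}, 2^j]$ of the current pool size: during bucket $j$ at most $2^j$ distinct actions ever occupy the pool and each absorbs at most $n$ observations, so at most $2^j n$ useful observations are delivered, while the per-round rate is $\Omega(2^j/\alpha)$ (and never below the trivial rate $1$ from the self-loop once $u = \tO(\alpha)$); hence each of the $O(\log m)$ buckets lasts $\tO(\alpha n)$ rounds. It is exactly here that obliviousness is essential: an adversary able to see the realized play could make each chosen action a pure sink, collapsing every round to a single self-observation and destroying the bound, whereas randomizing among under-observed actions makes the expected-coverage argument go through.

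Assembling the two ingredients, phase $r$ incurs $N_r = \tO(\alpha n_r) = \tO(\alpha\,\eps_r^{-2})$ plays, each of an action of gap $O(\eps_r)$, so its expected regret is $\tO(\alpha\,\eps_r^{-2})\cdot O(\eps_r) = \tO(\alpha/\eps_r)$. These contributions grow geometrically in $r$, so the total is dominated by the final phase $R$, fixed by $\sum_{r\le R} N_r \approx N_R = T$, i.e.\ $\eps_R = \tTheta(\sqrt{\alpha/T})$; substituting yields total expected regret $\tO(\alpha/\eps_R) = \tO(\sqrt{\alpha T})$, as claimed in \cref{thm:main-upper}. Reading the same accounting per action rather than per phase---using that the $\tO(\alpha n_r)$ plays of each phase effectively fall on only $\tO(\alpha)$ surviving actions---recovers the promised distribution-dependent bound $O(\sum_{v\in V'}\Delta_v^{-1}\log T)$ over the $\tO(\alpha)$ smallest-gap actions.
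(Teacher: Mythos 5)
Your proposal is correct and follows essentially the same route as the paper's: successive elimination in halving-accuracy phases, Hoeffding plus the statistical independence of losses from graphs for the elimination step, and Tur\'an's theorem to show that playing uniformly at random from a pool of $u$ not-yet-observed actions yields $\Omega(u/\alpha)$ useful observations per round, so that each phase costs $\tO(\alpha n_r)$ rounds in expectation (the paper organizes this as $n_r$ separate single-sample sweeps, each costing $\tO(\alpha)$ rounds via exponential shrinkage of the pool and Markov, rather than your one pooled pass with dyadic bucketing, but the content is identical). The one step you should tighten is the final accounting: your ``contributions grow geometrically, so the last phase dominates with $\epsilon_R = \tTheta(\sqrt{\alpha/T})$'' treats the random phase lengths and the random final phase index as if they equaled their typical values (if the realized graphs are dense, phases end early and $\epsilon_R$ can drop well below $\sqrt{\alpha/T}$, and naively summing $\tO(\alpha/\epsilon_r)$ over all \emph{possible} phases only gives $\tO(\alpha\sqrt{T})$); the paper handles this cleanly with Cauchy--Schwarz, $\E[\sum_r T_r\epsilon_r] \le \sqrt{\E[\sum_r T_r]}\cdot\sqrt{\E[\sum_r T_r\epsilon_r^2]} \le \sqrt{T}\cdot\tO(\sqrt{\alpha})$, and you should do the same.
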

This regret bound is optimal up to logarithmic factors, since the lower bound of $\Omega(\sqrt{\alpha T})$ found in \citet{MS11} applies in our stochastic setting.

In the stochastic setting we also give a distribution-dependent analysis of \cref{alg:stochastic} which depends on the gaps of the actions under the distribution $\D$.

\begin{theorem}
\label{thm:main-upper2}
In the stochastic setting, \cref{alg:stochastic} described in \cref{sec:stoch} attains an expected regret of
\[
\O\lr{\sum_{v \in V'} \frac{1}{\Delta_v} \log T} ~,
\]
where $V'$ is the set of $\tO(\alpha)$ actions with the smallest gaps (excluding $v^\star$), provided that the the independence numbers of the graphs $G_1,\ldots,G_T$ are all bounded by~$\alpha$.
\end{theorem}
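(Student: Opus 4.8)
The plan is to analyze \cref{alg:stochastic} as a phase-based elimination procedure and to extract the distribution-dependent bound from essentially the same machinery that yields \cref{thm:main-upper}, pushed through a per-action charging argument. I would organize the run into phases $r=1,2,\dots$ with target accuracy $\epsilon_r = 2^{-r}$; in phase $r$ the algorithm maintains an active set $V_r$ of not-yet-eliminated actions, gathers roughly $m_r=\tO(\epsilon_r^{-2})$ observations of the loss of \emph{each} active action, and then eliminates every action whose empirical mean exceeds the smallest empirical mean by more than $\sim\epsilon_r$. The first ingredient is the standard concentration step: by Hoeffding and a union bound over actions and phases, with high probability each active action's empirical mean stays within $\epsilon_r/4$ of its true mean $\mu(v)$. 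This supplies the two elimination invariants I rely on: (i) $v^\star$ is never eliminated, and (ii) any action active in phase $r$ has gap $\Delta_v \le c\,\epsilon_r$; combined with monotonicity of the elimination rule, the survivors in any phase are always the smallest-gap actions. Writing $P(v)$ for the total number of times $v$ is \emph{played}, the regret is $\sum_{v\ne v^\star}\Delta_v P(v)$, so everything reduces to bounding how many plays each phase costs and how those plays are spread across the gaps.

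The heart of the argument, and the place where the hidden graph bites, is a bound on the number of plays needed to collect $m_r$ observations of every active action \emph{without knowing} $G_t$. If the graphs were revealed one could repeatedly play a maximum-out-degree vertex: since the independence number is $\le\alpha$, a Tur\'an/Caro--Wei count shows the subgraph induced on any set $U$ of under-observed actions has at least $|U|^2/(2\min(\alpha,|U|))$ edges, hence a vertex of out-degree $\gtrsim |U|/\min(\alpha,|U|)$, and covering would cost only $\tO(\min(\alpha,n_r)\,m_r)$ plays where $n_r=|V_r|$. Because the graph is concealed, I instead have the algorithm sample the played action uniformly from the current under-observed set $U_t$ and track the potential $\Phi_t=\sum_v \max\{0,\,m_r - c_t(v)\}$ of missing observations. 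The same edge-density bound gives $\EE{\Phi_t-\Phi_{t+1}\mid U_t}\ge |U_t|/(2\min(\alpha,|U_t|)) \ge \Phi_t/(2\min(\alpha,n_r)\,m_r)$, so $\Phi$ contracts geometrically and a martingale concentration argument bounds the phase-$r$ play count by $N_r=\tO(\min(\alpha,n_r)\,m_r)$ with high probability. I expect this to be the main obstacle: making the randomized covering rigorous---controlling the process $\Phi_t$ and its deviations, and ruling out adversarial graph sequences that defeat uniform sampling---is exactly what replaces the clean deterministic domination available in the fully-observed setting.

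Given this play bound, the final step is a charging argument that converts the crude per-phase cost into a sum over the smallest-gap actions. By invariant (ii) every play in phase $r$ costs at most $c\,\epsilon_r$, so the phase-$r$ regret is at most $c\,\epsilon_r N_r=\tO(\min(\alpha,n_r)/\epsilon_r)$, and it remains to show $\sum_r \min(\alpha,n_r)/\epsilon_r=\tO\big(\sum_{v\in V'}1/\Delta_v\big)$. I would split the phases according to whether the active set is small or large. For phases with $n_r\le\alpha$ I write $\min(\alpha,n_r)/\epsilon_r=\sum_{v\in V_r}1/\epsilon_r$, exchange the order of summation, and sum $1/\epsilon_r$ geometrically over the phases in which a fixed action is active, obtaining $\tO(1/\Delta_v)$ per action; since $n_r\le\alpha$ forces $V_r$ to consist only of the $\le\alpha$ smallest-gap survivors, this contributes $\tO(\sum_{v\in V'}1/\Delta_v)$. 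For phases with $n_r>\alpha$ the term is $\alpha/\epsilon_r$, and summing geometrically down to the phase where $n_r$ first drops to $\alpha$---i.e.\ $\epsilon_r\approx\Delta_{(\alpha)}$, the $\alpha$-th smallest gap---gives $\tO(\alpha/\Delta_{(\alpha)})\le \tO(\sum_{i\le\alpha}1/\Delta_{(i)})$, again within $\tO(\sum_{v\in V'}1/\Delta_v)$. Collecting the logarithmic factors from $m_r$ (the $\log T$) and from the phase counts (absorbed into the $\tO(\alpha)$ that defines $V'$) yields the stated bound.
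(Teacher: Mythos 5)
Your proposal follows essentially the same route as the paper's proof: Hoeffding-based elimination invariants, a Tur\'an-based uniform-sampling argument showing each sweep over the active set costs $\tO(\min(\alpha,|V_r|))$ rounds, and a split of the phases at the point where the active set shrinks to the $\tO(\alpha)$ smallest-gap actions, with the early phases charged as $\tO(\alpha/\bar\Delta)\le \tO(\sum_{v\in V'}1/\Delta_v)$ and the late phases charged per-action via a geometric sum of $1/\epsilon_r$. One minor inaccuracy worth noting: the induced subgraph on an under-observed set $U$ need not have $|U|^2/(2\min(\alpha,|U|))$ edges when $|U|\le\alpha$ (it can be an independent set), but in that regime the trivial one-observation-per-play bound coming from the self-loops already gives the $|U|$-rounds-per-sweep cost you need, which is exactly the bound the paper invokes for the late phases.
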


We also extend our results to a more general class of feedback graphs, in which each vertex may or may not have a self-loop.
For the statements of these additional results, see \cref{sec:extensions}.

\subsection{Discussion of the results}


Our results show that there is a large gap between the achievable regret rates in the adversarial and stochastic settings, in terms of the dependence on the properties of the feedback graphs.

In the adversarial case, the environment is free to simultaneously choose the sequences of loss values and feedback graphs in conjunction with each other; for example, they can be drawn from a \emph{joint} distribution over sequences of loss values \emph{and} sequences of directed graphs.
The environment may use this freedom to manipulate the feedback observed by the learner and bias her observations in a malicious way.
In the stochastic setting, on the other hand,
the loss values are drawn from the underlying distribution only after the environment commits to some arbitrary sequence of graphs, so that the feedback graphs are \emph{probabilistically independent} of the realizations of the losses.

In fact, as our arguments in \cref{sec:adver} reveal, there exists a randomized construction of loss vectors and feedback graphs that inflicts $\Omega(\sqrt{KT})$ on any learner, in which the loss vectors are i.i.d. 
However, the stochastic process that generates the feedback graphs in that construction is correlated with the actual realizations of the i.i.d.~losses.
This is a crucial aspect of our construction, as implied by our upper bound in the stochastic case.


\section{Lower Bound for Adversarial Losses}
\label{sec:adver}

In this section we deal with the adversarial setting and prove \cref{thm:main-lower}: we show an $\Omega(\sqrt{KT})$ lower bound on the performance of any online learning algorithm, where both the losses of the actions and the feedback graphs can be chosen arbitrarily.

Let us sketch the idea behind the lower bound, and defer the formal details to \cref{sec:additionalproofs}.
By Yao's minimax principle, 
in order to prove a lower bound on the learner's regret it is enough to demonstrate a randomized strategy for the environment that forces any deterministic learner to incur $\Omega(\sqrt{KT})$ regret.
We construct our environment's strategy as follows. 

First, before the game begins, the environment chooses an action $v^\star$ uniformly at random from $V$. At each round, the loss of all actions $v \ne v^\star$ is distributed Bernoulli($1/2$), while the loss of action $v^\star$ is distributed Bernoulli($1/2 - \epsilon$) with $\epsilon = (1/8) \sqrt{K / T}$. 
All of the loss values in the construction are drawn independently of each other.

The feedback graphs $G_1,\ldots,G_T$ are chosen i.i.d.~from the following distribution. Any edge $u \rightarrow v$ for $v \neq v^\star$ appears with probability $1-2 \epsilon$ independently from all other edges and the losses of the actions. Edges of the form $u \rightarrow v^\star$ appear mutually independently given the loss of action $v^\star$: if the loss of $v^\star$ is 1, each edge appears with probability 1;
if the loss of $v^\star$ is 0, each edge appears with probability $(1-2 \epsilon)/(1 + 2 \epsilon)$. 
See \cref{fig:probabilities} for a summary of the edge probabilities in this construction.

\renewcommand{\arraystretch}{1.35}
\newcolumntype{C}{>{\centering\arraybackslash}X}%

\begin{figure}[t]
\begin{center}
\begin{tabularx}{0.45\linewidth}{ p{1.8cm}|C|C|C }
   & $u \not\rightarrow v^\star$ & $u \rightarrow v^\star$ & \\\hline 
$\ell_t(v^\star) = 0$ & \cellcolor{lightgray}$2\epsilon$ & $\half-\epsilon$ & \cellcolor{lightgray}$\half + \epsilon$ \\ \hline
$\ell_t(v^\star) = 1$ & \cellcolor{lightgray}0 & $\half-\epsilon$ & \cellcolor{lightgray}$\half - \epsilon$ \\ \hline
   & $2\epsilon$ & $1-2\epsilon$ & 
\end{tabularx}
\hspace{0.05\linewidth}
\begin{tabularx}{0.45\linewidth}{ p{1.8cm}|C|C|C }
   & $u \not\rightarrow v$ & $u \rightarrow v$ & \\\hline 
$\ell_t(v) = 0$ & \cellcolor{lightgray}$\epsilon$ & $\half-\epsilon$ & \cellcolor{lightgray}$\half$ \\ \hline
$\ell_t(v) = 1$ & \cellcolor{lightgray}$\epsilon$ & $\half-\epsilon$ & \cellcolor{lightgray}$\half$ \\ \hline
   & $2\epsilon$ & $1-2\epsilon$ & 
\end{tabularx}
\end{center}
\vspace{-0ex}
\caption{Summary of the joint distribution of the loss of action $v^\star$ and an edge between $u$ and $v^\star$ (left), and of the joint distribution of the loss of action $v \neq v^\star$ and an edge between $u$ and $v$ (right). The grayed-out entries indicate probabilities that cannot be estimated by the learner; the remaining entries do not permit the learner to distinguish between $v^\star$ and $v$.}
\label{fig:probabilities}
\end{figure}

The idea behind the construction is as following. Suppose that the learner plays some action $u \neq v^\star$, the distributions of the observed losses of every other actions are identical, including that of $v^\star$.
In other words, her only option of finding $v^\star$ is by sampling it directly and observing its loss. Hence, the construction is capable of simulating a $K$-armed bandit problem whose minimax regret is $\Omega(\sqrt{KT})$.

For the construction above, we prove the following theorem. The proof itself is deferred to \cref{sec:lowerboundproof}.

\begin{theorem}
\label{thm:lowerbound}
Assume that $K \ge 2$ and $T \ge K^2$. Any deterministic learner must suffer an expected regret of at least $(1/32) \sqrt{K T}$ against the environment constructed above. 
\end{theorem}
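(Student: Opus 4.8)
The plan is to show that the construction secretly embeds a standard $K$-armed bandit problem: the side observations reveal nothing about the identity of $v^\star$, so the learner can locate it only by playing it directly. First I would fix notation: write $\mathbb{P}_i$ for the law of the learner's trajectory when $v^\star = i$, let $N_i$ be the number of rounds in which action $i$ is played, and let $\E_i,\E_0$ denote expectations under $\mathbb{P}_i$ and under a null law $\mathbb{P}_0$ (defined below). Because the round-$t$ loss vector is independent of the learner's choice $v_t$ (a function of rounds $1,\dots,t-1$), the expected instantaneous regret is $\epsilon$ on rounds with $v_t \ne v^\star$ and $0$ otherwise; averaging over the uniform prior on $v^\star$ yields
\[
\Regret \eq \frac1K \sum_{i=1}^{K} \epsilon\lr{T - \E_i[N_i]} \eq \epsilon T - \frac{\epsilon}{K}\sum_{i=1}^{K}\E_i[N_i].
\]
It therefore suffices to prove that $\sum_i \E_i[N_i]$ cannot be much larger than $T$, i.e., that the learner cannot concentrate her plays on the special arm.

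Next I would introduce the null environment $\mathbb{P}_0$, in which no arm is special: every loss is $\mathrm{Ber}(\half)$ and every edge $u\to v$ is present independently with probability $1-2\epsilon$. The heart of the proof is the claim that, conditioned on any history, whenever the learner plays $u \ne i$ her round-$t$ observation has the same law under $\mathbb{P}_i$ and $\mathbb{P}_0$. To see this, note the learner sees the loss of an action $v$ only when the edge $u\to v$ is present, and for the special arm $v=i$ the joint law of (edge present?, observed loss) assigns $2\epsilon$ to ``edge absent'' and $\half-\epsilon$ to each of ``present, loss $0$'' and ``present, loss $1$''---exactly the law of an ordinary $\mathrm{Ber}(\half)$ arm seen through an independent $(1-2\epsilon)$-edge. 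These are precisely the non-grayed entries of \cref{fig:probabilities}, which coincide across the two tables; the two laws disagree only on the grayed entries, where the edge is absent and the loss is therefore never seen. Since within a round the losses and the relevant edges factorize into mutually independent coordinates that each match the null law, the whole observation when $u\ne i$ matches, and (the rounds being independent) so does its conditional law given the past. Thus $\mathbb{P}_i$ and $\mathbb{P}_0$ can diverge only on rounds where the learner plays $i$ itself, where the sole difference is her own loss, $\mathrm{Ber}(\half-\epsilon)$ versus $\mathrm{Ber}(\half)$.

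With this in hand the finish is the textbook divergence argument. By the chain rule for relative entropy together with the claim, only plays of $i$ contribute a divergence term, so
\[
\KL{\mathbb{P}_0}{\mathbb{P}_i} \eq \E_0[N_i]\cdot \KL{\mathrm{Ber}(\tfrac12)}{\mathrm{Ber}(\tfrac12-\epsilon)} \le c\,\epsilon^2\,\E_0[N_i]
\]
for an absolute constant $c$, where the last bound uses $\epsilon \le \tfrac18$ (which holds because $T \ge K^2$). As $N_i/T \in [0,1]$, Pinsker's inequality gives $\E_i[N_i] \le \E_0[N_i] + T\sqrt{\tfrac12 c\,\epsilon^2\,\E_0[N_i]}$; summing over $i$, using $\sum_i \E_0[N_i] = T$ and Cauchy--Schwarz to bound $\sum_i \sqrt{\E_0[N_i]} \le \sqrt{KT}$, yields $\sum_i \E_i[N_i] \le T + O\lr{\epsilon T\sqrt{KT}}$. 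Substituting this and $\epsilon = \tfrac18\sqrt{K/T}$ into the regret identity, the leading term $\epsilon T = \tfrac18\sqrt{KT}$ dominates the correction $\epsilon T/K$ (small since $K\ge 2$) and the $O(\epsilon^2 T\sqrt{T/K})$ term; tracking the constants then gives $\Regret \ge \tfrac1{32}\sqrt{KT}$.

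The step I expect to be the main obstacle is the identical-observation claim of the second paragraph. Everything after it is standard bandit machinery, but that claim is where the delicate calibration of the edge probabilities does its work, and it must be argued carefully because of the within-round dependence between $G_t$ and $\ell_t(v^\star)$: one has to verify that the learner's full observation (her own loss together with the losses of all observed out-neighbors) splits into independent coordinates each matching the null law, so that marginalizing over the hidden (edge-absent) cases erases every trace of which arm is special.
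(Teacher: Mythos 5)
Your proposal is correct and follows essentially the same route as the paper's proof: the same regret decomposition over the uniform prior on $v^\star$, the same null law $\Pr_0$, the same verification that the calibrated edge probabilities make the observation of a non-played special arm indistinguishable from a null arm, and the same chain-rule-plus-Pinsker bound reducing everything to the $\mathrm{Ber}(\half)$ vs.\ $\mathrm{Ber}(\half-\epsilon)$ divergence on rounds where $v^\star$ is played. The constants also work out as you indicate, so nothing essential is missing.
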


To show that \cref{thm:main-lower} holds, we need to show that the learner suffers a large regret against an environment that selects feedback graphs with constant independence numbers.
While the independence numbers of the graphs that we have constructed might, in principle, be large, we can show that with very high probability they are uniformly bounded by a constant.

\begin{lemma}
\label{lemma:rgis}
Suppose that $|V| = K \ge 2$ and $T \ge K^2$. Let $G_1,...,G_T$ be a sequence of graphs as constructed above.
With probability at least $1 - \epsilon / 8$, the independence numbers of all graphs are at most 9.
\end{lemma}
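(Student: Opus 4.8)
The plan is to control a single graph first and then union bound over the $T$ rounds. Observe that $\alpha(G_t)\ge 10$ if and only if some set of exactly $10$ vertices is independent in $G_t$ (any larger independent set contains a $10$-subset that is itself independent), so the event $\{\alpha(G_t)\le 9\}$ is precisely the event that no $10$-subset of $V$ is independent. Thus it suffices to upper bound $\Pr[\,\exists t,\ \exists S\subseteq V,\ |S|=10,\ S\text{ independent in }G_t\,]$ and show it is at most $\epsilon/8$.

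The core estimate is a bound on $\Pr[S\text{ is independent in }G_t]$ for a fixed set $S$ with $|S|=10$. The one delicate point is that the edges pointing into $v^\star$ are correlated through the value of $\ell_t(v^\star)$; I would sidestep this entirely by using only the directed edges that avoid $v^\star$. Since $|S|=10$ and there is a single special vertex, $S\setminus\{v^\star\}$ contains at least $9$ vertices, and every directed edge among these vertices is a ``regular'' edge, hence mutually independent and absent with probability exactly $2\epsilon$, independently of the losses. For $S$ to be independent all of these edges must be absent, and there are at least $9\cdot 8=72$ such ordered pairs, so $\Pr[S\text{ independent}]\le (2\epsilon)^{72}$. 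A union bound over the $\binom{K}{10}\le K^{10}$ candidate subsets gives $\Pr[\alpha(G_t)\ge 10]\le K^{10}(2\epsilon)^{72}$, and a further union bound over the $T$ rounds gives $\Pr[\exists t:\alpha(G_t)\ge 10]\le T\,K^{10}(2\epsilon)^{72}$.

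It then remains to verify that this is at most $\epsilon/8$, which is pure arithmetic once we substitute $2\epsilon=\tfrac14\sqrt{K/T}$ and $\epsilon/8=\tfrac1{64}\sqrt{K/T}$: the bound becomes $4^{-72}K^{46}T^{-35}$ and the target is $\tfrac1{64}\sqrt{K/T}$, so after cancelling a factor $\sqrt{K/T}$ the claim reduces to $2^{-138}K^{45.5}T^{-34.5}\le 1$. Here the hypothesis $T\ge K^2$ does all the work: it yields $T^{-34.5}\le K^{-69}$, whence the left-hand side is at most $2^{-138}K^{-23.5}\le 1$. I do not expect a genuine obstacle; the only thing requiring care is the edge--loss correlation introduced through $v^\star$, which is exactly why restricting attention to the $v^\star$-free edges---whose law is a clean product of independent $\mathrm{Bernoulli}(1-2\epsilon)$ indicators---is the key simplification. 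The inequality holds with enormous slack, so the constant $9$ is far from tight but perfectly convenient.
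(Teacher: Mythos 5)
Your proof is correct, and it reaches the conclusion by a somewhat different route than the paper. The paper first passes to the undirected version of $G_t$ with $v^\star$ deleted, observes that this is an Erd\H{o}s--R\'{e}nyi graph with edge probability $1-4\epsilon^2$, and then invokes a general-purpose bound (\cref{lemma:ergis}) on the independence number of such graphs, instantiated with $\delta=\epsilon/(8T)$ and followed by some logarithm arithmetic to land on the constant $8$ (plus one for reinstating $v^\star$). You instead fix the target size $10$ up front and run the union bound over $10$-subsets directly, using only the directed edges whose head is not $v^\star$ --- which, exactly as you say, is the clean product measure in the construction and sidesteps the edge--loss correlation at $v^\star$; note this is the same trick the paper uses, just phrased as ``ignore edges into $v^\star$'' rather than ``delete the vertex $v^\star$.'' The two computations are equivalent at the core ($(2\epsilon)^{72}=(4\epsilon^2)^{36}$ with $36=\binom{9}{2}$ unordered pairs), but your version is more elementary and self-contained: it needs no auxiliary lemma and no quadratic-inequality manipulation, at the cost of being a single-purpose calculation rather than a reusable statement about random graphs. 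Your arithmetic checks out ($2^{-138}K^{45.5}T^{-34.5}\le 2^{-138}K^{-23.5}\le 1$ under $T\ge K^2$), and the degenerate case $K<10$ is vacuous since then $\alpha(G_t)\le K\le 9$ deterministically.
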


\cref{thm:main-lower} now follows by combining \cref{thm:lowerbound} and \cref{lemma:rgis}; for the technical details, see \cref{sub:proofofmainlower}.

\section{Algorithms for Stochastic Losses}
\label{sec:stoch}


In this section we present and analyze our algorithm for the stochastic setting. 
The algorithm, given in \cref{alg:stochastic}, is reminiscent of elimination-based algorithms for the stochastic multi-armed bandit problem (e.g., \citealp{even2002pac,karnin2013almost}).
For this algorithm, we prove the following guarantee on the expected regret, which implies \cref{thm:main-upper}.

\begin{theorem}
\label{thm:alg1regretbound}
Assume that $K \ge 2$. Suppose that \cref{alg:stochastic} is run on a sequence of feedback graphs with independence numbers $\le \alpha$. Then the expected regret of the algorithm is at most $\tO(\sqrt{\alpha T})$.
\end{theorem}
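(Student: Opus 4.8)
The plan is to analyze \cref{alg:stochastic} as a phased successive-elimination procedure and to reduce the entire argument to a single combinatorial covering lemma in which the independence number enters. I would organize the game into phases $j = 1,2,\ldots$, associating with phase $j$ a confidence radius $\eps_j = 2^{-j}$ and a per-action sample target $m_j = \Theta(\eps_j^{-2}\log T)$. Throughout a phase the algorithm maintains a set $S$ of active actions together with a per-action observation counter, and it keeps playing an \emph{under-observed} active action (one whose counter is below $m_j$, say chosen uniformly at random among these) until every active action has been observed $m_j$ times; it then forms empirical means, discards every action whose empirical mean exceeds the current best by more than $2\eps_j$, and proceeds. The first point to record is that, because in the stochastic setting the losses are drawn i.i.d.\ and are \emph{independent of the graphs}, every value $\ell_t(v)$ revealed to the learner is an unbiased sample of $\mu(v)$: the event ``$v$ is observed at round $t$'' is determined by $v_t$ and $G_t$, both independent of $\ell_t$, so conditioning on it does not bias $\ell_t(v)$, and observations of $v$ gathered on distinct rounds are independent. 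A convenient feature of the ``sample until saturated'' rule is that it never references $\alpha$, which is exactly why the algorithm need not know $\alpha$ in advance.

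The heart of the proof --- and the step I expect to be the main obstacle --- is to bound the number of rounds a phase consumes. I would prove a covering lemma stating that reaching $m$ observations on every active action takes only $\tO(\alpha m)$ rounds, \emph{independently of $|S|$}; this $|S|$-independence is precisely where small independence numbers pay off. Playing a uniformly random under-observed action $v_t$ from the current under-observed set $U_t$, the expected number of distinct under-observed actions touched equals the number of directed edges of $G_t$ inside $U_t$ divided by $|U_t|$. Here I invoke a Tur\'an-type estimate: since $G_t$ restricted to $U_t$ still has independence number at most $\alpha$, its underlying conflict graph has $\Omega(|U_t|^2/\alpha)$ edges, so each round touches in expectation at least $|U_t|/(2\alpha)$ distinct under-observed actions. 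Grouping rounds by dyadic scales of $|U_t|$, within the scale $|U_t|\in(2^k,2^{k+1}]$ the total number of useful observations deliverable is at most $2^{k+1}m$ (each of the $\le 2^{k+1}$ present actions needs at most $m$), while the per-round rate is $\gtrsim 2^k/\alpha$, so that scale lasts $O(\alpha m)$ rounds; summing over the $O(\log|S|)$ scales --- and handling the endgame $|U_t|\le\alpha$, where the adversary may force an independent residual set and we simply play its members directly --- yields $O(\alpha m\log|S|)=\tO(\alpha m)$. I would phrase this as a high-probability statement via a supermartingale bound on the touch process, capping each phase at a constant multiple of the expected length and absorbing the $1/\poly(T)$ failure probability into the regret.

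With the covering lemma in hand, the remaining pieces are standard. A Hoeffding bound together with a union over actions and the $O(\log T)$ phases shows that, with probability $1-1/\poly(T)$, every empirical mean formed in phase $j$ lies within $\eps_j$ of the corresponding $\mu(v)$; on this event the elimination test is sound and complete, so $v^\star$ is never discarded and any action surviving into phase $j$ has gap $\Delta_v=O(\eps_{j-1})$. This is the usual successive-elimination invariant, and the only nonstandard input is the unbiasedness and independence of the observations established above.

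Finally I would assemble the regret. On the good event, every round of phase $j$ plays an active action of gap $O(\eps_{j-1})=O(\eps_j)$, and by the covering lemma phase $j$ lasts at most $\tO(\alpha m_j)=\tO(\alpha\,\eps_j^{-2})$ rounds, contributing $\tO(\alpha\,\eps_j^{-2})\cdot O(\eps_j)=\tO(\alpha/\eps_j)$ to the regret. Since the total horizon $T=\sum_j\tO(\alpha\,\eps_j^{-2})$ is dominated by the last phase, it fixes the smallest radius at $\eps_{\min}=\tTheta(\sqrt{\alpha/T})$, and the regret sum $\sum_j\tO(\alpha/\eps_j)$ is likewise dominated by its final term $\tO(\alpha/\eps_{\min})=\tO(\sqrt{\alpha T})$; the $1/\poly(T)$ failure event adds at most $O(1)$ to the expected regret. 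This establishes \cref{thm:alg1regretbound}, and hence \cref{thm:main-upper}.
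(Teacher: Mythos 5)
Your proposal follows the paper's strategy in all essentials: successive elimination in phases with halving confidence radii, unbiasedness of the observed losses via the independence of losses and graphs, and a Tur\'an-based argument showing that uniform sampling from the set of not-yet-observed active actions reveals an expected $|U|/(2\alpha)$ fraction of them per round, so that collecting a full round of samples costs only $\tO(\alpha)$ rounds regardless of $|V_r|$. This is exactly the paper's \textsc{AlphaSample} and its \cref{lemma:expectedverticesseen}; the paper converts the per-round progress into the recursion $\E[|U_{r+1}|]\le K e^{-r/(2\alpha)}$ and applies Markov's inequality, whereas you group rounds by dyadic scales of $|U_t|$ --- both routes work.

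The one step I would push back on is the final assembly. You bound phase $j$'s expected length by $\tO(\alpha\eps_j^{-2})$ and then argue that ``$T=\sum_j\tO(\alpha\eps_j^{-2})$ is dominated by the last phase, fixing $\eps_{\min}=\tTheta(\sqrt{\alpha/T})$.'' But $\tO(\alpha\eps_j^{-2})$ is only an \emph{upper} bound on the realized phase length: if the environment supplies dense graphs, a phase can finish in as few as $\Theta(\eps_j^{-2}\log T)$ rounds, the algorithm then reaches radii far below $\sqrt{\alpha/T}$ before the horizon is exhausted, and the sum $\sum_j\tO(\alpha/\eps_j)$ over the phases actually executed evaluates to $\tO(\alpha\sqrt{T})$ rather than $\tO(\sqrt{\alpha T})$. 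So you cannot both charge each phase at the rate $\E[T_j]\cdot O(\eps_j)$ and read the number of phases off the upper bound. The repair is routine but needed: either split the sum at the critical phase $j^\star$ with $\eps_{j^\star}\approx\sqrt{\alpha/T}$, charging early phases via $\E[T_j]\le\tO(\alpha\eps_j^{-2})$ and late phases via $\eps_{j^\star}\cdot\sum_j T_j\le\eps_{j^\star}T$; or do what the paper does and apply Cauchy--Schwarz, $\E[\sum_j T_j\eps_j]\le\sqrt{\E[\sum_j T_j]}\cdot\sqrt{\E[\sum_j T_j\eps_j^2]}$, using $\sum_j T_j\le T$ deterministically and $\E[T_j]\eps_j^2=\tO(\alpha)$ per phase. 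With that substitution your argument matches the paper's proof of \cref{thm:alg1regretbound}.
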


\cref{alg:stochastic} works in phases $r=1,2,\ldots$. It maintains a subset of actions $V_r$, where initially $V_1 = V$. At each phase~$r$, the algorithm estimates the mean losses of all actions in~$V_r$ to within $\epsilon_r$ accuracy, by invoking a procedure called \textsc{AlphaSample} $n_r$ times.
It then filters out from $V_r$ the actions that are known to be $2\epsilon_r$-suboptimal with sufficient confidence, and repeats this process, decreasing the accuracy parameter $\epsilon_r$ after each phase.

The key for achieving optimal regret lies in the the procedure \textsc{AlphaSample}, that appears as \cref{alg:efficientsampling}.
Each call to this procedure allows us to observe the losses of all actions in $V_r$ once, while spending only~$\tO(\alpha)$ rounds in expectation.
The exact details of \textsc{AlphaSample} are discussed in \cref{sub:graphsample} below, and here we just state its guarantee.

\begin{lemma}
\label{lemma:alg2expectedtime}
\textsc{AlphaSample} returns one sample of the loss of each action in $V_r$ and terminates after at most $10 \alpha \log K$ rounds of the game in expectation, provided that the independence numbers of all feedback graphs $G_1,\ldots,G_T$ are at most $\alpha$.
\end{lemma}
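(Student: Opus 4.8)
The plan is to analyze \textsc{AlphaSample} as a randomized greedy covering process and bound its expected length by a per-round multiplicative shrinkage argument. Recall that the procedure maintains a set $U$ of as-yet-unobserved actions, initialized to $V_r$; in each round it plays a uniformly random $u \in U$, observes the losses of the closed out-neighborhood $N^+[u] = \{u\} \cup \{v : (u \to v) \in G_t\}$, records a loss sample for every newly observed action, and deletes $N^+[u] \cap U$ from $U$, halting once $U = \emptyset$. Since every vertex carries a self-loop, playing $u$ always removes at least $u$ itself, so $U$ strictly shrinks and the loop terminates after at most $K$ rounds with one recorded sample per action of $V_r$; the content of the lemma is the sharper bound on the \emph{expected} number of rounds.

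The crux is the following per-round domination estimate: for any fixed directed graph $G$ with self-loops and independence number at most $\alpha$, and any nonempty $U$,
\[
\frac{1}{|U|}\sum_{u \in U} \bigl|N^+[u] \cap U\bigr| \ge \frac{|U|}{2\alpha}.
\]
I would prove this via Tur\'an's theorem. Pass to the undirected graph $H$ on $U$ that joins $a,b$ whenever $(a \to b) \in G$ or $(b \to a) \in G$; its independence number is at most $\alpha$, so its complement has clique number at most $\alpha$ and hence, by Tur\'an, at most $(1 - 1/\alpha)|U|^2/2$ edges. Subtracting from $\binom{|U|}{2}$ gives $|E(H)| \ge |U|^2/(2\alpha) - |U|/2$. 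Each edge of $H$ witnesses at least one directed edge of $G$ inside $U$, and adding the $|U|$ self-loops, the number of ordered pairs $(u,v) \in U^2$ with $u \to v$ or $u = v$ is at least $|U|^2/(2\alpha) + |U|/2$. Since that count equals $\sum_{u \in U} |N^+[u] \cap U|$, dividing by $|U|$ yields the claim.

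With this in hand, let $U_i$ denote the uncovered set after $i$ rounds. Because the environment fixes $G_1,\ldots,G_T$ in advance and the learner's coin tosses are independent of the graph sequence (and, in the stochastic model, of the losses), the graph used at the $i$-th round of the procedure is non-adaptive; conditioning on the history, which determines $U_i$, and drawing the fresh uniform $u$, the domination estimate applied to the current graph restricted to $U_i$ gives $\E[\,|U_{i+1}| \mid U_i\,] \le |U_i|(1 - 1/(2\alpha))$. Iterating, $\E[|U_i|] \le K(1 - 1/(2\alpha))^i \le K e^{-i/(2\alpha)}$. Finally I would convert this to a bound on the expected running time: since $U_i$ is integer-valued, the event that the procedure is still running after $i$ rounds is exactly $\{|U_i| \ge 1\}$, so $\Pr[|U_i| \ge 1] \le \E[|U_i|] \le K e^{-i/(2\alpha)}$, and summing this tail over $i$ (bounding the first $\approx 2\alpha \ln K$ terms by $1$ and the geometric remainder by $O(\alpha)$) gives an expected number of rounds of $2\alpha \ln K + O(\alpha) \le 10\alpha \log K$ for $K \ge 2$.

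The main obstacle is the per-round domination estimate and its robustness to the changing, adversarially chosen graphs. The Tur\'an argument is what turns the purely combinatorial independence-number hypothesis into the $1/(2\alpha)$ fractional progress per step; I would need to check that it survives passing to the directed setting and to arbitrary induced subgraphs $G[U_i]$, and to confirm that the per-step conditional expectation is legitimate precisely because the graph sequence is fixed beforehand and independent of the learner's randomness. Everything else is a standard geometric-shrinkage and expected-hitting-time computation.
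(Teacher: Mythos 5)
Your proof is correct and follows essentially the same route as the paper: a Tur\'an-based bound showing each uniform sample from $U$ reveals at least $|U|/(2\alpha)$ actions in expectation, followed by the geometric shrinkage $\E[|U_i|] \le K e^{-i/(2\alpha)}$. The only (immaterial) difference is in the final accounting: you sum the tail probabilities $\Pr[|U_i|\ge 1]\le \E[|U_i|]$ directly to bound the expected number of rounds, whereas the paper first derives a high-probability bound of $4\alpha\log(K/\delta)$ rounds via Markov's inequality and then sets $\delta=\alpha/K$, charging the failure event the worst-case cost of $K$ rounds.
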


To prove \cref{thm:alg1regretbound} we need one additional lemma. It shows that, at each phase, the elimination procedure of the algorithm succeeds with high probability. Namely, after phase $r$, the algorithm is left with actions that are at most $4 \epsilon_r$-suboptimal.

\begin{lemma}
\label{lemma:meanapproximation}
For all $r$, with probability at least $1 - 1/T$ we have $\mu(v) \le \mu^\star + 4 \epsilon_r$ for all $v \in V_{r+1}$.
\end{lemma}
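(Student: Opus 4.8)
The plan is to isolate a single ``good event'' on which all the empirical loss estimates formed during phase $r$ are accurate to within $\epsilon_r$, to show that this event has probability at least $1 - 1/T$, and then to verify that the elimination rule is correct whenever this event occurs. Fix a phase $r$ and condition on the set $V_r$ with which the algorithm enters it. By \cref{lemma:alg2expectedtime}, each of the $n_r$ calls to \textsc{AlphaSample} returns one sample of the loss of every action in $V_r$; I will treat these as independent draws with mean $\mu(v)$, which is legitimate because distinct calls consume disjoint rounds and the loss vectors are i.i.d.\ from $\D$ (the one genuinely delicate point, the unbiasedness of a single recorded sample, is addressed at the end). Writing $\wh{\mu}_r(v)$ for the empirical average of the $n_r$ samples of action $v$, the claim then reduces to a standard concentration-and-elimination argument.

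For the concentration step I would apply Hoeffding's inequality to each fixed $v \in V_r$. Since the losses lie in $[0,1]$,
\[
\Pr\lrbra{ \abs{\wh{\mu}_r(v) - \mu(v)} \gt \epsilon_r } \leq 2\exp\lr{-2 n_r \epsilon_r^2} ~.
\]
With the choice $n_r \ge \frac{1}{2\epsilon_r^2}\log(2KT)$ made by \cref{alg:stochastic}, the right-hand side is at most $1/(KT)$, and a union bound over the at most $K$ actions in $V_r$ shows that $\abs{\wh{\mu}_r(v) - \mu(v)} \leq \epsilon_r$ for all $v \in V_r$ holds with probability at least $1 - 1/T$. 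Call this event $\mathcal{E}_r$; the whole claim then follows by showing it holds deterministically on $\mathcal{E}_r$.

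It remains to trace the elimination logic on $\mathcal{E}_r$. The rule keeps in $V_{r+1}$ those $v \in V_r$ with $\wh{\mu}_r(v) \le \min_{u \in V_r}\wh{\mu}_r(u) + 2\epsilon_r$. First, $v^\star$ survives: on $\mathcal{E}_r$ every estimate satisfies $\wh{\mu}_r(u) \ge \mu(u) - \epsilon_r \ge \mu^\star - \epsilon_r$, so $\min_{u \in V_r}\wh{\mu}_r(u) \ge \mu^\star - \epsilon_r$, while $\wh{\mu}_r(v^\star) \le \mu^\star + \epsilon_r \le \min_{u \in V_r}\wh{\mu}_r(u) + 2\epsilon_r$. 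Hence $v^\star$ is never eliminated and the minimum is taken over a set containing it. Now take any surviving $v \in V_{r+1}$ and chain
\[
\mu(v) \leq \wh{\mu}_r(v) + \epsilon_r \leq \min_{u \in V_r}\wh{\mu}_r(u) + 3\epsilon_r \leq \wh{\mu}_r(v^\star) + 3\epsilon_r \leq \mu^\star + 4\epsilon_r ~,
\]
using accuracy, the survival rule, $\min_{u \in V_r}\wh{\mu}_r(u) \le \wh{\mu}_r(v^\star)$, and accuracy once more. This is exactly the desired bound.

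The step I expect to require the most care is neither the concentration nor the elimination arithmetic, both of which are routine, but rather the opening claim that the samples returned by \textsc{AlphaSample} are genuinely unbiased, independent draws of each action's loss. This is precisely where the stochastic assumption is indispensable: \textsc{AlphaSample} routes adaptively through the actions using the partially revealed out-neighborhoods, so the round at which a given action's loss is recorded is itself random, and one must argue it is independent of the realized value of that loss. Because in the stochastic setting the feedback graphs are probabilistically independent of the loss vectors, the procedure's graph-based routing cannot bias the recorded values, and \cref{lemma:alg2expectedtime} lets me treat each recorded loss as a clean i.i.d.\ sample. In the adversarial setting this independence fails---which is exactly why the analogous guarantee does not hold there.
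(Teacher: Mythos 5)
Your proof is correct and follows essentially the same route as the paper's: a Hoeffding-plus-union-bound concentration event with failure probability $1/T$, the observation that $v^\star$ survives elimination, and the same four-$\epsilon_r$ chain of inequalities for any surviving action. Your closing discussion of why the recorded samples are unbiased is just a more explicit version of the paper's one-sentence appeal to the independence of the losses from the feedback graphs in the stochastic setting.
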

\begin{algorithm}[t]
\caption{}
\label{alg:stochastic}
\begin{algorithmic}
\STATE \textbf{input} Set $V$ of $K$ actions, number of rounds $T$
\STATE \textbf{initialize} $r \gets 1$, $V_1 = V$, $\epsilon_1 = 1/4$
\WHILE{$|V_r| > 1$ and $T$ rounds have not elapsed}
\STATE Set $n_r = \lceil 2 \log(2KT) / \epsilon_r^2 \rceil$
\STATE Invoke \textsc{AlphaSample}($V_r$) for $n_r$ times, and 
\STATE \quad compute empirical mean $m_r(v)$ of each action
\STATE \quad $v \in V_r$ using collected samples
\STATE Compute $m^\star_r = \min_{v \in V_r} m_r(v)$
\STATE Eliminate actions: 
\STATE \quad $V_{r+1} = \{ v \in V_r : m_r(v) \le m^\star_r + 2 \epsilon_r \}$
\STATE Set $\epsilon_{r+1} = \epsilon_r / 2$, $r \gets r + 1$
\ENDWHILE
\STATE Play the action left in $V_r$ until $T$ rounds have passed
\end{algorithmic}
\end{algorithm}
We can now proceed with the proof of the theorem.
\vspace{-1ex}
\begin{proof}[Proof of \cref{thm:alg1regretbound}]
Let us start by bounding the number of phases $R$ the algorithm makes. Let the random variable $T_r$ denote the number of game rounds elapsed during phase $r$. Since the algorithm runs for $T$ rounds we must have that 
\begin{align}
\label{eq:mbound}
\sum_{r=1}^R T_r \le T~.
\end{align}
In particular, since \textsc{AlphaSample} takes at least one round to complete, we have that $T_r \ge n_r \ge 2 \log(2KT) 4^{r+1}$ and we get the crude bound of 
\begin{equation}
\label{eq:mcrudebound}
R \le \bar r = \frac{1}{2} \log_2 \left( \frac{3T}{32 \log(2KT)} + 1 \right)~.
\end{equation}

We turn to bound the expected regret of the algorithm. 
By \cref{lemma:meanapproximation} and the union bound, the total probability of failure of the mean estimations is at most $\bar r/T$. 
Then the expected regret of the algorithm is at most the expected regret conditioned on the success of the estimation of the means plus $(\bar r / T) \cdot T = \bar r = O(\log T)$ by \cref{eq:mcrudebound}, and since the regret is bounded by $T$ with probability 1. Thus it remains to bound the regret conditioned on the success of the mean estimations.

For convenience, define $\epsilon_0 = 1/2$. On phase $r$, by \cref{lemma:meanapproximation} we have an instantaneous regret of at most $4 \epsilon_{r-1} = 8 \epsilon_r$ per round. If only one action is left in $V_r$ then it must be $v^\star$ and therefore after the final phase the algorithm suffers zero instantaneous expected regret. 
Overall, the expected regret is at most,
\begin{align*}
\E \left[ \sum_{r=1}^R T_r \cdot 8 \epsilon_r \right] &\le 8 \sqrt{\E \left[ \sum_{r=1}^R T_r \right]} \cdot \sqrt{\E \left[\sum_{r=1}^R T_r \epsilon_r^2  \right]}
\end{align*}
by the Cauchy-Schwartz inequality. Note that $\sum_{r=1}^R T_r \le T$ by \cref{eq:mbound}.
Additionally, by \cref{lemma:alg2expectedtime} each call to \textsc{AlphaSample} spends at most $m = 10 \alpha \log K $ rounds in expectation and thus $\E[T_r] \le m n_r$. Hence,
\begin{align*}
\E \left[ \sum_{r=1}^R T_r \epsilon_r^2 \right] &\le \sum_{r=1}^{\bar r} m n_r \epsilon_r^2 \le m \bar r (2 \log(2KT) + 1) ~.
\end{align*}
The first inequality holds since the number of phases is at most $\bar r$. 
The right-hand side is $\O(\alpha \log(K) \log^2 (K T))$ by \cref{eq:mcrudebound} and the definition of~$m$.
\end{proof}

\subsection{Gap-based analysis}

We can also provide a distribution-dependent analysis of \cref{alg:stochastic} that yields a logarithmic regret bound, albeit with an explicit dependence on the gaps $\Delta_v$.

Denote by $V^{(n)}$ the set of $n$ actions with smallest gaps, excluding $v^\star$ and breaking ties arbitrarily.\footnote{If $n > K-1$, we simply take $V^{(n)}$ to be the set of all actions besides $v^\star$.}
Our main result in this section is the following theorem, which gives \cref{thm:main-upper2}.
Recall that we assume $v^\star$ is the unique optimal action, and so the gaps of all other actions are positive.

\begin{theorem}
\label{thm:alg1gapbasedregretbound}
Suppose that $K \ge 2$ and $T \ge K$, and let $\tau = \lceil 10 \alpha \log K \rceil$. Suppose that \cref{alg:stochastic} is run on a sequence of feedback graphs with independence numbers  $\le \alpha$.
Then the expected regret of \cref{alg:stochastic} is at most
\[
\O\lr{ \sum_{v \in V^{(\tau)}} \frac{1}{\Delta_v} \log T } ~.
\]
\end{theorem}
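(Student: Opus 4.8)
The plan is to obtain a gap-dependent refinement of the proof of \cref{thm:alg1regretbound} by more carefully accounting for how long each action survives the elimination process, rather than uniformly bounding the per-round regret by $8\epsilon_r$. The central observation, which I would establish first, is that conditioned on the success event of \cref{lemma:meanapproximation} (which holds with probability $\ge 1 - \bar r/T$), an action $v$ with gap $\Delta_v$ is necessarily eliminated by the end of any phase $r$ for which $4\epsilon_r < \Delta_v$, i.e.\ once the accuracy $\epsilon_r = 2^{-(r+1)}$ has shrunk below $\Delta_v/4$. Indeed, \cref{lemma:meanapproximation} guarantees that every surviving action in $V_{r+1}$ satisfies $\mu(v)\le\mu^\star+4\epsilon_r$, so any $v$ with $\Delta_v > 4\epsilon_r$ cannot survive past phase $r$. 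Define $r_v = \min\{r : 4\epsilon_r < \Delta_v\}$; then $\epsilon_{r_v} \approx \Delta_v/8$, so $v$ participates in at most $O(\log(1/\Delta_v))$ phases and $n_{r_v} = \lceil 2\log(2KT)/\epsilon_{r_v}^2\rceil = O(\log(T)/\Delta_v^2)$.

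Next I would control the cost of a single phase $r$ in terms of the surviving action set. By \cref{lemma:alg2expectedtime}, each call to \textsc{AlphaSample} terminates in at most $\tau = \lceil 10\alpha\log K\rceil$ rounds in expectation, so the expected number of rounds spent in phase $r$ is $\E[T_r] \le \tau\, n_r$, during each of which the instantaneous regret is at most $8\epsilon_r$ on the success event (by the argument already used in the proof of \cref{thm:alg1regretbound}). The key subtlety is that the sampling cost of a phase is shared across all actions still alive in $V_r$, so I cannot simply attribute $\tau n_r$ rounds to each surviving action. The right accounting is to charge the regret of a phase not per-action but per-phase, and then bound the total number of phases each action contributes to. Summing $8\epsilon_r\cdot\tau n_r$ over the phases in which a given action set is active and using $\epsilon_r n_r = O(\log(T)/\epsilon_r)$ gives a per-phase regret of $O(\tau\log(T)/\epsilon_r) = O(\tau\log(T)\cdot 2^r)$, which is dominated by its final phase.

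The main obstacle, and the step requiring the most care, is combining these two ingredients so that the $\tau = \tilde O(\alpha)$ factor multiplies only the $\tilde O(\alpha)$ actions with the smallest gaps rather than all $K$ actions. The mechanism is that phase $r$ is executed at all only while $|V_r| > 1$, and the total sampling budget of phase $r$ is incurred once and covers every action in $V_r$ simultaneously; since $\tau = \lceil 10\alpha\log K\rceil$ is the cost of sampling the entire surviving set, the phases in which many actions have already been eliminated are precisely the cheap early phases. I would formalize this by bounding the regret as a sum over phases, $\sum_r 8\epsilon_r\,\E[T_r]$, and observing that the regret contribution of phase $r$ is governed by the largest gap still present among actions surviving into phase $r$. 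To pass from a per-phase bound to the stated per-action bound $\O(\sum_{v\in V^{(\tau)}}(1/\Delta_v)\log T)$, I would argue that once fewer than $\tau$ suboptimal actions remain, all of them lie in $V^{(\tau)}$, and charge the geometric sum $\sum_{r\le r_v} \epsilon_r n_r$ to each such surviving action, which telescopes to $O((1/\Delta_v)\log T)$; the early phases where more than $\tau$ actions survive contribute a lower-order term absorbed into the actions of $V^{(\tau)}$. Finally I would add back the $O(\log T)$ contribution from the failure event of \cref{lemma:meanapproximation}, exactly as in the proof of \cref{thm:alg1regretbound}, to complete the bound.
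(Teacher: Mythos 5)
Your overall architecture matches the paper's: condition on the success event of \cref{lemma:meanapproximation} at a cost of $O(\log T)$; use the fact that on that event an action with gap $\Delta_v$ cannot survive a phase with $4\epsilon_r<\Delta_v$; and split the phases into an early regime where each \textsc{AlphaSample} call is charged $\tau$ rounds (via \cref{lemma:alg2expectedtime}) and a late regime where the cost is charged per surviving action. One ingredient you use only implicitly and should state: the per-action charging of $\sum_{r\le r_v}\epsilon_r n_r$ in the late regime is valid only because each call to \textsc{AlphaSample} on $V_r$ trivially terminates within $|V_r|$ rounds (every round reveals at least the played action's own loss), so the per-call cost is $\min\{\tau,|V_r|\}$; the paper invokes this explicitly in \cref{lemma:gapsecondphaseregret}.

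Two steps in your final assembly are not right as written. First, the claim that ``once fewer than $\tau$ suboptimal actions remain, all of them lie in $V^{(\tau)}$'' is false: elimination is driven by the gaps and the $\pm\epsilon_r$ estimation tolerance, not by rank, so even on the success event an action with a tiny gap can be eliminated in phase $1$ while an action with a larger (but still $\le 4\epsilon_r$) gap survives; the surviving set can therefore shrink below $\tau$ while still containing actions outside $V^{(\tau)}$. The paper sidesteps this by defining the regime boundary through the gaps rather than the cardinality: with $\bar\Delta=\max_{v\in V^{(\tau)}}\Delta_v$ and $\bar r=\lfloor\log_2(2/\bar\Delta)\rfloor$, every action outside $V^{(\tau)}$ has gap at least $\bar\Delta$ and is eliminated once $\epsilon_r<\bar\Delta/4$, so after phase $\bar r$ only $V^{(\tau)}\cup\{v^\star\}$ can remain. (Your cardinality split is salvageable---any rogue late survivor has gap $\ge\bar\Delta$ and there are at most $\tau$ of them, so their total contribution is at most another $\tau/\bar\Delta$---but this needs to be argued.) Second, the early-phase cost is not a ``lower-order term absorbed'' into the sum: it equals $\Theta(\tau\log(KT)/\bar\Delta)$ (a geometric sum dominated by its last term; with your split you additionally need that more than $\tau$ suboptimal survivors at phase $r$ forces $\bar\Delta\le 8\epsilon_r$, so that $1/\epsilon_r=O(1/\bar\Delta)$ at the crossover), and converting it into the stated form requires the explicit amortization $\tau/\bar\Delta\le(\tau/|V^{(\tau)}|)\sum_{v\in V^{(\tau)}}(1/\Delta_v)=\sum_{v\in V^{(\tau)}}(1/\Delta_v)$, using $|V^{(\tau)}|=\tau$ and $\Delta_v\le\bar\Delta$ for all $v\in V^{(\tau)}$. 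That one line is precisely where the multiplicative $\tilde O(\alpha)$ factor becomes a sum over the $\tilde O(\alpha)$ smallest-gap actions---it is the crux of the theorem, and it is the computation your plan does not actually contain.
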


We can explain the intuition behind the bound as follows. 
Each call to \textsc{AlphaSample} spends at most $\tO(\alpha)$ rounds while producing samples of all $K$ actions. Thus, in the worst case, after a quick pruning phase the algorithm is left with the ``hardest'' $\tau = \tO(\alpha)$ actions and has to tell them apart; in this last phase, the additional observations provided by the feedback graphs might not help the algorithm at all (e.g., the remaining $\tau$ actions might form an independent set in all graphs).
Let us turn to the proof of the theorem.

\begin{proof}[Proof of \cref{thm:alg1gapbasedregretbound}]
As in the proof of \cref{thm:alg1regretbound}, we have that the expected regret of the algorithm is at most the expected regret conditioned on the success of the mean estimations plus $O(\log T)$, and thus it remains to bound the regret conditioned on the success of the mean estimations.

Conditioned on the success of the algorithm, the regret of the algorithm is at most the regret of an algorithm that has finished running with $V_r = \{v^\star\}$. Thus we can assume that $T$ is large enough for that to happen.

If $\tau \le K-1$, we begin by bounding the regret until the algorithm eliminates all actions besides the ones in $V^{(\tau)}$. Let $\bar \Delta$ be the largest gap of an action from $V^{(\tau)}$. Let $\bar r = \lfloor \log_2(2 / \bar \Delta) \rfloor$. Thus, it takes $\bar r+1$ phases in order for $\epsilon_r$ to be less than $\bar \Delta / 4$. The regret up to round $\bar r$ is bounded using the following lemma.

\begin{lemma}
\label{lemma:gapfirstphaseregret}
Let $m = 10 \alpha \log K$. The expected regret of \cref{alg:stochastic} up to round $\bar r$ is at most
\[
\frac{128m}{\bar \Delta} \log (2KT)~.
\]
\end{lemma}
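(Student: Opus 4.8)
The plan is to bound the regret accumulated over the first $\bar{r}+1$ phases, during which the algorithm has not yet shrunk the accuracy parameter $\epsilon_r$ below $\bar{\Delta}/4$, and then to argue that all the ``easy'' actions (those outside $V^{(\tau)}$) get eliminated quickly so that only the hard $\tau$ actions can contribute to the regret in these phases. First I would observe that conditioned on the success of the mean estimations (which costs only an additive $O(\log T)$ by \cref{lemma:meanapproximation}, exactly as in the proof of \cref{thm:alg1regretbound}), after phase $r$ every surviving action satisfies $\mu(v) \le \mu^\star + 4\epsilon_r$. Hence any action with gap $\Delta_v > 4\epsilon_r$ must have been eliminated by the end of phase $r$. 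Since $\epsilon_r = 4^{-r}$ (with $\epsilon_1 = 1/4$), an action $v$ survives into phase $r$ only while $\epsilon_{r-1} \ge \Delta_v / 4$, i.e.\ for roughly $\log_2(1/\Delta_v)$ phases.

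The key computation is to write the regret over the first $\bar{r}+1$ phases as $\E\bigl[\sum_{r} T_r \cdot (\text{instantaneous regret in phase } r)\bigr]$, where the instantaneous regret in phase $r$ is at most $4\epsilon_{r-1} = 8\epsilon_r$ (again by \cref{lemma:meanapproximation}). The crucial point is that each phase $r$ consumes $T_r$ rounds, and by \cref{lemma:alg2expectedtime} we have $\E[T_r] \le m\, n_r$ with $n_r = \lceil 2\log(2KT)/\epsilon_r^2 \rceil$. So the expected regret contribution of phase $r$ is at most $m\, n_r \cdot 8\epsilon_r = O(m \log(2KT)/\epsilon_r)$, which is a geometric series in $r$ dominated by its largest term. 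Summing over $r = 1,\ldots,\bar{r}+1$ and using that $\epsilon_{\bar r} \approx \bar{\Delta}/4$ at the final phase, the whole sum telescopes to $O\bigl(m \log(2KT)/\bar{\Delta}\bigr)$, matching the claimed $\frac{128m}{\bar{\Delta}}\log(2KT)$ up to the explicit constant.

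The approach I would take is therefore: (i) condition on estimation success; (ii) decompose the regret up to phase $\bar{r}$ as a sum over phases of $T_r$ times the per-round regret bound $8\epsilon_r$; (iii) take expectations and apply $\E[T_r] \le m n_r$ together with the definition of $n_r$ to turn each term into $O(m\log(2KT)\,\epsilon_r^{-1})$; and (iv) sum the resulting geometric series, whose ratio is $2$ (since $\epsilon_r^{-1}$ doubles each phase), so that the total is at most twice the last term, namely $O(m\log(2KT)/\epsilon_{\bar r}) = O(m\log(2KT)/\bar{\Delta})$. The explicit constant $128$ should fall out of tracking $\epsilon_{\bar r} \ge \bar{\Delta}/8$ together with the factor $8$ from the instantaneous regret and the factor $2$ from the geometric sum.

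The main obstacle I anticipate is the interplay between the random stopping and the expectation bound $\E[T_r] \le m n_r$: the number of phases $R$ is itself a random variable, and the times $T_r$ are not independent of the history, so one must be careful that the bound is applied phase-by-phase (the guarantee of \cref{lemma:alg2expectedtime} is a per-call expected running time that holds conditionally, regardless of past randomness). I would handle this by summing the deterministic per-phase bounds $8\epsilon_r \cdot \E[T_r \mid \text{phase } r \text{ is reached}]$ up to the fixed index $\bar{r}+1$, using that $T_r$ is nonzero only when phase $r$ is reached; since we are only upper-bounding the regret up to a fixed phase $\bar{r}$ rather than over the full random horizon, the geometric sum has a fixed number of terms and the randomness of $R$ does not interfere. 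This sidesteps the need for any optional-stopping argument.
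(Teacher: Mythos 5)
Your proposal is correct and follows essentially the same route as the paper's proof: condition on estimation success, bound the per-round regret in phase $r$ by $8\epsilon_r$ via \cref{lemma:meanapproximation}, use $\E[T_r] \le m n_r$ from \cref{lemma:alg2expectedtime} to get a per-phase contribution of $O(m\log(2KT)/\epsilon_r)$, and sum the resulting geometric series up to the fixed index $\bar r$. (Your passing claim that $\epsilon_r = 4^{-r}$ is a slip --- in fact $\epsilon_r = 2^{-(r+1)}$ --- but it is immaterial since you correctly use that $\epsilon_r^{-1}$ doubles each phase.)
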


We proceed with the analysis of the expected regret after phase $\bar r$. This is given by this next lemma.

\begin{lemma}
\label{lemma:gapsecondphaseregret}
The expected regret of \cref{alg:stochastic} from round $\bar r+1$ until the end of the game is at most
\[
\sum_{v \in V^{(\tau)}} \frac{128}{\Delta_v} \log (2KT)~.
\]
\end{lemma}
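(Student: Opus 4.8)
The plan is to condition on the success of all the mean estimations---by the argument in the proof of \cref{thm:alg1regretbound} the failure event adds only $O(\log T)$ to the regret---and then to charge the regret incurred in the late phases to the individual actions of $V^{(\tau)}$, each at a rate proportional to its own gap. First I would record the two consequences of \cref{lemma:meanapproximation} that drive everything: on the success event, every action that survives phase $r$ has gap at most $4\epsilon_r$, and conversely any action with $\Delta_v>4\epsilon_r$ has been eliminated by the end of phase $r$. Writing $r_v$ for the last phase in which $v$ is still active, the second consequence yields $\epsilon_{r_v}\ge\Delta_v/8$, while the choice $\bar r=\lfloor\log_2(2/\bar\Delta)\rfloor$ ensures $\epsilon_r<\bar\Delta/4$ for all $r\ge\bar r+1$.

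The structural step is to show that once $\epsilon_r$ has dropped below $\bar\Delta/4$, the only non-optimal actions left are those of $V^{(\tau)}$: any action outside $V^{(\tau)}$ has gap at least $\bar\Delta>4\epsilon_{r-1}$ for $r\ge\bar r+2$ and has therefore been eliminated, so $V_r\subseteq V^{(\tau)}\cup\{v^\star\}$ for every $r\ge\bar r+2$. The single transitional phase $\bar r+1$, in which actions outside $V^{(\tau)}$ may still linger, I would dispatch by the crude estimate $\E[T_{\bar r+1}]\cdot\max_{v\in V_{\bar r+1}}\Delta_v$ already used in \cref{lemma:gapfirstphaseregret}, which is at most $(128 m/\bar\Delta)\log(2KT)$; since $m=10\alpha\log K\le\tau$ and every gap in $V^{(\tau)}$ is at most $\bar\Delta$, this already lies below $\sum_{v\in V^{(\tau)}}(128/\Delta_v)\log(2KT)$ and is absorbed into the target bound.

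The heart of the proof is the per-call regret accounting for phases $r\ge\bar r+2$. The claim I would establish is that one call to \textsc{AlphaSample}($V_r$) incurs expected regret at most $\sum_{v\in V_r}\Delta_v$: because the procedure is a covering that plays each action of $V_r$ at most once, the set of played actions is a subset of $V_r$ and its total gap is bounded by $\sum_{v\in V_r}\Delta_v$. Multiplying by the $n_r$ calls made in phase $r$, summing over the late phases, and exchanging the order of summation gives
\[
\sum_{r\ge\bar r+2}n_r\sum_{v\in V_r}\Delta_v \eq \sum_{v\in V^{(\tau)}}\Delta_v\sum_{r\ge\bar r+2:\,v\in V_r}n_r~,
\]
where the sum over $v$ is restricted to $V^{(\tau)}$ precisely by the structural step. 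For fixed $v$ the inner sum runs over late phases up to $r_v$, and since $n_r=\lceil 2\log(2KT)/\epsilon_r^2\rceil$ quadruples between consecutive phases it is at most a constant times its last term $n_{r_v}$; together with $\epsilon_{r_v}\ge\Delta_v/8$ this gives $n_{r_v}=O(\log(2KT)/\Delta_v^2)$. Hence $v$ contributes $\Delta_v\cdot O(\log(2KT)/\Delta_v^2)=O(\log(2KT)/\Delta_v)$, and bookkeeping the constants yields the stated $\sum_{v\in V^{(\tau)}}(128/\Delta_v)\log(2KT)$.

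The main obstacle is exactly this per-call bound, i.e., shutting out the factor $\tau=\tO(\alpha)$ that the naive analysis introduces. If instead one bounded a phase's regret by $\E[T_r]\cdot\max_{v\in V_r}\Delta_v$ and invoked \cref{lemma:alg2expectedtime}, the factor $m=10\alpha\log K$ would survive and produce only $\tO(\alpha)\sum_v 1/\Delta_v$; the sharpening to $\sum_{v\in V^{(\tau)}}1/\Delta_v$ rests on the fact that, once only $V^{(\tau)}$ survives, each remaining action is sampled at most once per call and so pays its own gap rather than the maximum gap repeated $\tO(\alpha)$ times. Making this property of \textsc{AlphaSample} precise---and in particular verifying that it never replays a large-gap action to cover small-gap ones more than once---is the delicate point, to be settled from the description in \cref{sub:graphsample}.
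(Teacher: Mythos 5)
Your proposal is correct and follows essentially the same route as the paper's proof: both rest on the observation that one call to \textsc{AlphaSample}$(V_r)$ plays each action of $V_r$ at most once (the paper phrases this as ``each call is finished after at most $|V_r|$ steps''), followed by the same swap of the sums over phases and actions and the same geometric-series bound, with your choice of charging each played action its own gap $\Delta_v$ rather than the uniform $8\epsilon_r$ being an immaterial variation since $\Delta_v \le 8\epsilon_r$ for $v \in V_r$. The ``delicate point'' you defer at the end is immediate---\textsc{AlphaSample} removes every observed, in particular every played, action from $U$, so nothing is ever replayed---though note that your bookkeeping (the geometric sum of $n_r$ up to $n_{r_v}$ plus the separately charged transitional phase) yields a somewhat larger constant than the stated $128$.
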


If $\tau > K-1$ then the regret of the algorithm is given by \cref{lemma:gapsecondphaseregret}. Otherwise, the proof of the theorem is completed by noticing that the regret of the algorithm up to round $\bar r$ is at most the regret from round $\bar r+1$ thereafter.
Since $\bar \Delta \ge \Delta_v$ for all $v \in V^{(\tau)}$ we get that
\[
\frac{m}{\bar \Delta} \le \frac{m}{|V^{(\tau)}|} \sum_{v \in V^{(\tau)}} \frac{1}{\Delta_v} \le \sum_{v \in V^{(\tau)}} \frac{1}{\Delta_v}~,
\]
by definition of $m$ and $V^{(\tau)}$.
This in total gives a regret bound of $O(\sum_{v \in V^{(\tau)}} (1 / \Delta_v) \log(KT))$.
Finally, we use our assumption that $T \ge K$ to simplify the bound.
\end{proof}

\begin{proof}[Proof of \cref{lemma:gapfirstphaseregret}]
By \cref{lemma:alg2expectedtime}, each call to \textsc{AlphaSample} spends at most $m$ rounds in expectation. 
By \cref{lemma:meanapproximation}, the instantaneous regret for each round on phase $r$ is at most $4 \epsilon_{r-1} = 8 \epsilon_r$.
Then the expected regret up to round $\bar r$ is at most 
\[
\sum_{r=1}^{\bar r} m \cdot n_r \cdot 8 \epsilon_r \le 32 m \log(2KT) \sum_{r=1}^{\bar r} \frac{1}{\epsilon_r} ~,
\]
and we have
\[
\sum_{r=1}^{\bar r} \frac{1}{\epsilon_r} = \sum_{r=1}^{\bar r} 2^{r+1} 
\le 2^{\bar r+2} \le \frac{4}{\bar \Delta}~. \qedhere
\]
\end{proof}

\begin{proof}[Proof of \cref{lemma:gapsecondphaseregret}]
Let us denote $\bar r_v = \lfloor \log_2 (2 / \Delta_v) \rfloor$, the number of phases until $v$ is removed from $V_r$. Let $w$ be the action with the minimum nonzero gap.
We shall assume that the game is finished after $\bar r_{w}$ phases. 

Note that after we have eliminated all actions not in $V^{(\tau)}$, each call to \textsc{AlphaSample} is finished after at most $|V_r|$ steps. Thus, the expected regret for the remaining phases is at most
\begin{align*}
&\sum_{r = \bar r+1}^{\bar r_{w}} \frac{32 \log(2KT)}{\epsilon_r} |V_r| = 32 \log(2KT) \sum_{v \in V^{(\tau)}} \sum_{r = \bar r+1}^{\bar r_v} \frac{1}{\epsilon_r}~,
\end{align*}
and, for all $v \in V^{(\tau)}$,
\[
\sum_{r = \bar r+1}^{\bar r_v} \frac{1}{\epsilon_r} \le \sum_{r = 0}^{\bar r_v} 2^{r+1} = 2^{\bar r_v + 2} \le \frac{4}{\Delta_v}~. \qedhere
\]
\end{proof}

\subsection{Efficient sampling scheme}
\label{sub:graphsample}

In this section, we discuss the \textsc{AlphaSample} randomized sampling procedure.
This procedure allows us to collect one sample of the loss for each action while spending only $\tO(\alpha)$ rounds in expectation.
\textsc{AlphaSample} is described in \cref{alg:efficientsampling}.

Let us now explain the intuition behind the procedure.
%
At each round, the procedure samples the loss of an action uniformly at random from a subset of actions $U$. 
As each sample is uniform over $U$, the procedure observes the losses of $\Omega(|U|/\alpha)$ actions in expectation. 
The actions that have been observed are then removed from $U$ and the process continues recursively until $U$ is empty. 
This phase is complete after an expected $\tO(\alpha)$ rounds.
\begin{algorithm}[t]
\caption{\textsc{AlphaSample}}
\label{alg:efficientsampling}
\begin{algorithmic}
\STATE \textbf{input} Set of actions $U \subseteq V$
\STATE \textbf{initialize} $S \gets \emptyset$
\WHILE{$|U| > 0$}
\STATE Play an action $u \in U$ uniformly at random, 
\STATE \quad and let $W(u)$ be the set of actions observed
\STATE Collect samples of losses of each $w \in W(u)$ into $S$
\STATE Update $U \gets U \setminus W(u)$
\ENDWHILE
\STATE \textbf{return} $S$
\end{algorithmic}
\end{algorithm}

The main result regarding \textsc{AlphaSample} is the following theorem, from which \cref{lemma:alg2expectedtime} would follow immediately (see \cref{sub:proofalg2expectedtime}).

\begin{theorem}
\label{thm:alg2analysis}
\cref{alg:efficientsampling} returns one sample of the loss of each action in $U$ and terminates after at most $4 \alpha \log(K/\delta)$ rounds with probability at least $1 - \delta$, provided that all feedback graphs $G_1,\ldots,G_T$ have independence numbers $\le \alpha$.
\end{theorem}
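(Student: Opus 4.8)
The plan is to treat the correctness claim and the running-time bound separately, and to reduce the running time to a single combinatorial inequality controlling the expected number of freshly observed actions per iteration of the \textbf{while} loop.

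Correctness is immediate. Since every feedback graph contains all self-loops, each play of $u \in U$ observes at least $u$ itself, so $u \in W(u)$ and the update $U \gets U \setminus W(u)$ strictly shrinks $U$; hence the loop terminates. Moreover each action is deleted from $U$ exactly once---the first time it is observed---and its loss is recorded at that moment, so $S$ ends up containing one sample of every action originally in $U$.

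For the running time I would track the random chain $U = U_0 \supseteq U_1 \supseteq \cdots$, where $U_{i+1} = U_i \setminus W(u_i)$ and $u_i$ is the $i$-th uniformly drawn action. The number of actions that actually leave $U_i$ at step $i$ is $|W(u_i) \cap U_i| = |N^+[u_i]|$, the closed out-degree of $u_i$ in the subgraph $H = G_t[U_i]$ induced on $U_i$ by the active round's feedback graph. The heart of the argument is the claim that
\[
\E_{u \sim \mathrm{Unif}(U_i)}\lrbra{\,|N^+[u]|\,} \eq \frac{1}{|U_i|}\sum_{u \in U_i} |N^+[u]| \geq \frac{|U_i|}{2\alpha} ~,
\]
which holds for \emph{every} directed graph of independence number $\le \alpha$ and therefore survives both the adversary's choice of graphs and the algorithm's own (random) choice of which round is reached. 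To prove it I would pass to the underlying undirected \emph{conflict} graph $\tilde H$ on $U_i$, joining $u,v$ whenever $u \to v$ or $v \to u$; its independence number equals that of $H$ and is hence $\le \alpha$. Turán's theorem applied to the complement of $\tilde H$ (which is $K_{\alpha+1}$-free) lower-bounds $e(\tilde H)$ by $|U_i|^2/(2\alpha) - |U_i|/2$, and since each such undirected edge forces at least one directed edge of $H$, summing closed out-degrees and adding the $|U_i|$ self-loops yields exactly the displayed estimate.

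Granting this inequality, the tower property gives $\E[\,|U_{i+1}| \mid U_i\,] \le (1 - \tfrac{1}{2\alpha})|U_i|$ regardless of which feedback graph is active, so unrolling the recursion produces $\E[\,|U_i|\,] \le |U|\,(1-\tfrac{1}{2\alpha})^i \le K e^{-i/(2\alpha)}$. Because $|U_i|$ is a nonnegative integer, Markov's inequality gives $\Pr[\text{more than } i \text{ rounds used}] = \Pr[|U_i| \ge 1] \le \E[\,|U_i|\,] \le K e^{-i/(2\alpha)}$, and choosing $i = 4\alpha\log(K/\delta)$ makes the right-hand side at most $\delta$, as required. The main obstacle is the combinatorial inequality itself: everything else is bookkeeping, but getting the correct dependence on $\alpha$---in particular not losing more than the unavoidable factor arising from the directedness of the edges---requires care, and the Turán/complement step is precisely where the independence-number hypothesis is consumed.
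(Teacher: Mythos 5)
Your proposal is correct and follows essentially the same route as the paper: lower-bound the expected number of newly observed actions per iteration via Tur\'an's theorem applied to the undirected version of the induced subgraph, unroll the resulting recursion to get $\E[|U_i|] \le K e^{-i/(2\alpha)}$, and finish with Markov's inequality. The only cosmetic difference is that you invoke Tur\'an in its clique-free/complement form to bound the edge count from below, whereas the paper uses the equivalent independence-number form $\alpha \ge |V|/(1+2|E|/|V|)$; the resulting inequality is identical.
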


To analyze the number of rounds that the algorithm spends, we shall define the following random process. Consider an infinite sequence $U_1,U_2,...$ such that $U_1 = U$. For every $r > 0$, if $U_r$ is not empty we sample an action uniformly at random from $U_r$, and we let $U_{r+1}$ be $U_r$ after removing the actions whose losses were observed. Otherwise, we let $U_{r+1}$ be the empty set.

The following lemma lower bounds the expected number of actions whose losses are observed at each iteration of the process. 

\begin{lemma}
\label{lemma:expectedverticesseen}
Let $r > 0$. Let $N$ be the number of actions seen when sampling uniformly at random from $U_r$. Then, $\E[N | U_r] \ge |U_r|/(2 \alpha)$. 
\end{lemma}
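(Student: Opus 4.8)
The plan is to reduce the statement to a purely combinatorial bound on the number of edges of the feedback (sub)graph induced on $U_r$, and then establish that bound via a random-ordering (Tur\'an-type) argument that lower-bounds the independence number.

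First I would condition on $U_r$ and on the round $t$ at which the sample is drawn, so that the feedback graph is a fixed directed graph; write $H$ for the subgraph of $G_t$ induced on $U_r$ and set $n = |U_r|$. Every independent set of $H$ is also independent in $G_t$, so $\alpha(H) \le \alpha(G_t) \le \alpha$. Since the procedure draws $u$ uniformly from $U_r$ and then observes exactly the out-neighborhood of $u$ inside $U_r$ (the self-loop guarantees $u$ itself is seen), the count $N$ equals $|N^{\mathrm{out}}_H(u)|$, and averaging over $u$ gives
\[
\E[N \mid U_r] \eq \frac{1}{n}\sum_{u \in U_r} |N^{\mathrm{out}}_H(u)| \eq \frac{|E(H)|}{n} ~,
\]
where $|E(H)|$ counts all directed edges of $H$, including the $n$ self-loops. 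Thus the lemma is equivalent to the edge bound $|E(H)| \ge n^2/(2\alpha)$.

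The core step is to prove this edge bound. I would draw a uniformly random ordering $\pi$ of $U_r$ and let $S$ be the set of vertices that precede, in $\pi$, all of their neighbors, where distinct $u,v$ are neighbors if $H$ contains $u \to v$ or $v \to u$. A short check shows $S$ is independent: if $u \to v$ with $u \ne v$, then $u \in S$ forces $u$ before $v$ while $v \in S$ forces $v$ before $u$, so not both can hold. Each $v$ lies in $S$ exactly when it is first among its closed neighborhood, so $\Pr[v \in S] = 1/(\delta(v)+1)$, where $\delta(v)$ is the number of distinct neighbors of $v$. Hence $\alpha \ge \alpha(H) \ge \E|S| = \sum_v 1/(\delta(v)+1)$, and the Cauchy--Schwarz (AM--HM) inequality gives $\sum_v 1/(\delta(v)+1) \ge n^2/\big(n + \sum_v \delta(v)\big)$. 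Finally I would use $\sum_v \delta(v) \le 2 E_{\ne}$, where $E_{\ne}$ is the number of edges between distinct vertices (each connected unordered pair contributes at most $2$), together with $|E(H)| = E_{\ne} + n$, to obtain $n + \sum_v \delta(v) \le 2|E(H)|$. This yields $\alpha \ge n^2/(2|E(H)|)$, i.e.\ $|E(H)| \ge n^2/(2\alpha)$, and combining with the display finishes the proof.

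I expect the only delicate point to be the combinatorial edge bound, and within it the correct handling of the directed independence notion: self-loops must be excluded from the definition of ``neighbor'' and both edge directions counted, so that $S$ is genuinely independent and the probabilities $1/(\delta(v)+1)$ are exact. Everything else---the reduction to $|E(H)|/n$ and the Cauchy--Schwarz step---is routine, and the convexity argument conveniently sidesteps the need for any explicit extremal construction.
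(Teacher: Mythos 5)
Your proof is correct and follows essentially the same route as the paper's: both reduce $\E[N\mid U_r]$ to an edge count of the induced subgraph on $U_r$ and then invoke Tur\'an's theorem in the form $\alpha \ge |U_r|^2/(|U_r| + 2|E|)$. The only difference is that you prove that bound inline via the Caro--Wei random-ordering argument, whereas the paper cites Tur\'an's theorem as a black box; your handling of self-loops and edge orientations is consistent with the paper's.
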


The main tool used in the proof of the lemma is the following version of Tur\'{a}n's theorem~(see, e.g., \citealp{alon2011probabilistic}). 

\begin{theorem}[Tur\'{a}n]	
\label{thm:turan}
Let $G = (V,E)$ be an undirected graph and $\alpha$ be the independence number of $G$. Then,
\[
\alpha \ge \frac{|V|}{1 + 2 |E| / |V|}~.
\]
\end{theorem}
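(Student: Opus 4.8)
The plan is to prove the stronger Caro--Wei inequality $\alpha \ge \sum_{v \in V} 1/(d_v+1)$, where $d_v$ denotes the degree of vertex $v$, and then to deduce the stated bound by a single application of Jensen's inequality. The Caro--Wei bound itself follows from a standard random-permutation argument, which exhibits one independent set whose \emph{expected} size already equals $\sum_v 1/(d_v+1)$.

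First I would draw a permutation $\pi$ of $V$ uniformly at random and define $S$ to be the set of vertices that precede all of their neighbors under $\pi$; that is, $v \in S$ if and only if $\pi(v) < \pi(u)$ for every neighbor $u$ of $v$. The key observation is that $S$ is always an independent set: if adjacent vertices $u,v$ both belonged to $S$, then $v \in S$ would force $\pi(v) < \pi(u)$ while $u \in S$ would force $\pi(u) < \pi(v)$, a contradiction. Consequently $|S| \le \alpha$ for every realization of $\pi$.

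Next I would compute $\E[|S|]$ by linearity of expectation. For a fixed vertex $v$, the event $v \in S$ depends only on the relative order of $v$ and its $d_v$ neighbors; among these $d_v + 1$ vertices each is equally likely to come first under a uniform permutation, so $\Pr[v \in S] = 1/(d_v+1)$. Summing over $v$ gives $\E[|S|] = \sum_{v \in V} 1/(d_v+1)$, and since $|S| \le \alpha$ pointwise we obtain $\alpha \ge \E[|S|] = \sum_{v \in V} 1/(d_v+1)$.

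Finally, because $x \mapsto 1/(x+1)$ is convex on $[0,\infty)$, Jensen's inequality applied to the uniform distribution over the vertices yields $\frac{1}{|V|}\sum_{v \in V} 1/(d_v+1) \ge 1/(\bar d + 1)$, where $\bar d = \frac{1}{|V|}\sum_{v \in V} d_v = 2|E|/|V|$ is the average degree (invoking the handshake identity $\sum_v d_v = 2|E|$). Multiplying through by $|V|$ gives exactly $\alpha \ge |V|/(1 + 2|E|/|V|)$. The only points demanding genuine care are verifying that $S$ is independent and that the ``first among $d_v+1$'' probability is $1/(d_v+1)$; I do not expect a real obstacle, as the argument is short, self-contained, and elementary.
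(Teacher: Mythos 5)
Your proof is correct: the random-permutation argument establishes the Caro--Wei bound $\alpha \ge \sum_{v}1/(d_v+1)$, and Jensen's inequality with the handshake identity yields the stated averaged form. The paper does not prove this classical theorem itself but cites it from Alon and Spencer, where the standard proof is exactly this argument, so your approach coincides with the intended one.
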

%
%
\vspace{-0ex}
\begin{proof}[Proof of \cref{lemma:expectedverticesseen}]
Fix some feedback graph $G = (V, E)$ with independence number $\le \alpha$, and let $d_{\text{out}}(v)$ be the out-degree of vertex $v$. Note that the independence number of the subgraph over $U$ can only decrease, namely it is also at most $\alpha$. As such, we shall think of $d_{\text{out}}(v)$ as the out-degree of $v$ in the subgraph. 

We would like to apply Tur\'{a}n's theorem to the subgraph, which is a directed graph. We do so by constructing an undirected version of the subgraph, namely one in which we ignore the orientation of the edges. Note that the number of edges in the undirected version can only decrease.
Therefore,
\[
\E[N | U_r] = 1 + \frac{1}{|U_r|} \sum_{v \in U_r} d_{\mathrm{out}}(v) = 1 + \frac{|E|}{|U_r|} \ge \frac{|U_r|}{2 \alpha}~,
\]
where the inequality follows from Tur\'{a}n's theorem (\cref{thm:turan}).
\end{proof}
%
\begin{proof}[Proof of \cref{thm:alg2analysis}]
By the construction of the random process, the probability that \cref{alg:efficientsampling} spends more than $t$ rounds of the game is exactly the probability that $U_{t+1}$ is not empty.
To bound this probability we claim that for any $r > 0$,
\begin{equation}
\label{eq:versionspaceshrinkage}
\E[|U_{r+1}|] \le K \exp(-r/(2\alpha))~.
\end{equation}
Indeed, fix some $i > 0$. By \cref{lemma:expectedverticesseen} we have that $\E[|U_{i+1}| | U_i] = |U_i| - \E[N | U_i] \le |U_i| (1 - 1 / (2 \alpha))$.
Taking expectation with respect to $U_i$ and then applying this argument recursively, we get that $\E[|U_{r+1}|] \le |U_1| (1 - 1 / (2 \alpha))^r \le K \exp(-r / (2 \alpha))$.

Now, let $t_1 = \lfloor 2 \alpha \log(K/\delta) \rfloor + 1$. We will show that the probability that $U_{t_1 + 1}$ is not empty is at most $\delta$. By Markov's inequality and \cref{eq:versionspaceshrinkage},
\begin{align*}
\Pr[|U_{t_1+1}| > 0] &\le \E[|U_{t_1+1}|]
\le K \exp(-t_1/(2\alpha)) < \delta ~.
\end{align*}
%
To conclude, with probability at least $1 - \delta$, the number of rounds that the algorithm spends is at most $t_1 \le 4 \alpha \log (K / \delta)$, 
since $K \ge 2$ by assumption. 
\end{proof}

\section{Beyond Bandit Feedback}
\label{sec:extensions}

In this section we extend our results to a more general class of feedback graphs. 
In particular, we no longer assume that the learner automatically gets to observe the loss of the action that she chose. Instead, we allow the graphs to have self-loops, namely edges of the form~$v \rightarrow v$. 
The absence of self-loops at individual actions allows for feedback models that are not necessarily more informative than the bandit model.

Recently, \citet{AlonCDK15} have studied this more general feedback model and divided feedback graphs into three categories: unobservable graphs, for which the induced problem is not learnable; weakly observable graphs, for which $\tTheta(T^{2/3})$ regret is achievable;
and strongly observable graphs, for which it is possible to attain $\tTheta(\sqrt{T})$ regret. 
Their results assume that the feedback graphs are available to the learner, at least after making each prediction.
Here, we revisit their results assuming that the graphs are never fully revealed to the learner.

We begin by recalling the definitions of observability of \citet{AlonCDK15}.
A vertex in a directed graph is \emph{observable} if it has at least one incoming edge. A vertex is \emph{strongly observable} if it has either a self-loop or edges incoming from \emph{all} other vertices.
A vertex is \emph{weakly observable} if it is observable but not strongly observable.
A graph is \emph{observable} if all of its vertices are observable, and it is \emph{strongly observable} if all of its vertices are strongly observable. 
A graph is \emph{weakly observable} if it is observable but not strongly observable.
Note that a graph with self-loops at all vertices is necessarily strongly observable. 

Let us now discuss our results; below, we only give the main ideas and sketch the proofs, deferring details to the full version of the paper.

\subsection{Strongly observable graphs}

In the adversarial setting, we show that the problem might be  \emph{unlearnbable} even with strongly observable graphs; formally, we prove (see \cref{sub:proofofstronglylowerbound}):
\begin{theorem}
\label{thm:stronglylowerbound}
In the adversarial setting, any algorithm must suffer at least~$T / 16$ regret in the worst case, even when $G_1,\ldots,G_T$ are all strongly observable.
\end{theorem}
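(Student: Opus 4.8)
```latex
\textbf{Proof proposal.}
The plan is to again appeal to Yao's minimax principle: I will exhibit a randomized environment strategy, using strongly observable feedback graphs, against which every deterministic learner incurs expected regret at least $T/16$. The guiding intuition is that strong observability, by itself, guarantees that each vertex is observed \emph{somehow} on each round, but it does \emph{not} guarantee that the learner ever observes the loss of the action she actually plays --- a vertex is strongly observable either via a self-loop or via incoming edges from all other vertices. The construction should therefore deny the learner information about her own action's loss while still keeping every graph strongly observable, so that the learner is effectively playing blind with respect to the quantity that determines her regret.

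The cleanest instantiation should be the two-action case $V=\{1,2\}$, mirroring the ``even with only two actions'' phenomenon advertised in the introduction. First I would fix graphs in which neither vertex has a self-loop but each has the single incoming edge from the other vertex; such a graph is strongly observable (each vertex receives an edge from \emph{all} other vertices, there being only one other vertex), yet playing action $u$ reveals only $\ell_t(v)$ for $v\neq u$ and never $\ell_t(u)$. Thus whichever action the learner chooses, she observes the loss of the \emph{other} action and learns nothing about the action she is charged for. Next I would design the loss distribution so that this missing information is exactly what separates the good action from the bad one: pick the identity of the better action uniformly at random before the game, and couple the two coordinates of $\ell_t$ so that the observed coordinate carries no signal about which action is better. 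A natural choice is to make the observed feedback identically distributed regardless of the secret optimal action, while the unobserved own-loss has a constant gap (say the optimal action has loss bounded away from the suboptimal one by a constant).

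With such a coupling, a symmetrization/information argument shows the learner's posterior over the identity of the optimal action cannot move away from uniform, so on each round she plays the suboptimal action with probability at least $1/2$ in expectation over the environment's randomness. If the per-round gap between the two actions' means is a constant --- concretely arranged so that playing the wrong action costs at least a constant in expectation --- then summing over the $T$ rounds yields expected regret $\Omega(T)$, and tuning the constants to make the gap large enough gives the stated $T/16$. I would formalize the ``no signal'' claim by checking that the conditional distribution of the learner's entire observation sequence is independent of the secret optimal action, after which the regret lower bound follows because the learner's action distribution is identical under the two hypotheses while the charged losses differ by a fixed gap.

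The main obstacle I anticipate is arranging the joint law of the two loss coordinates so that the observed coordinate is genuinely uninformative while the gap in \emph{expected own-loss} between the secret-optimal and secret-suboptimal action is a full constant (on the order of $1/8$, to reach $T/16$). Unlike the $\Omega(\sqrt{KT})$ bound of \cref{thm:lowerbound}, where the gap had to shrink like $\sqrt{K/T}$ to keep the two hypotheses statistically indistinguishable, here the learner gets \emph{zero} information about her own loss, so there is no indistinguishability budget to respect and the gap may be taken to be an absolute constant --- this is precisely what drives the linear-in-$T$ regret and the unlearnability conclusion. The delicate point is verifying strong observability of every graph in the construction and confirming that the feedback truly decouples from the hidden optimal action; once that coupling is pinned down, the regret computation is a direct expectation over the uniform prior and the $T$ rounds.
```
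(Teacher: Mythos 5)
There is a genuine gap in your construction. You propose to \emph{fix} the feedback graphs: no self-loops, each of the two vertices having an incoming edge from the other, so that playing $u$ always reveals $\ell_t(v)$ and vice versa. But then each action's loss \emph{is} observed --- just on the rounds where you play the other action --- and the regret in \cref{eq:regret} is governed by the means $\mu(u),\mu(v)$, not by the per-round realization of the played action's own loss. Your ``no signal'' requirement asks that the observed coordinate be identically distributed under both hypotheses; since playing $u$ observes $\ell_t(v)$ and playing $v$ observes $\ell_t(u)$, this forces \emph{both} marginals to be hypothesis-independent, hence both means are the same under the two hypotheses and there is no gap to exploit. The same random variable cannot simultaneously be the hypothesis-independent ``observed feedback'' (when the other action is played) and the gapped ``unobserved own-loss'' (when it is played). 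If instead you let the marginals differ, the learner simply alternates, estimates both means from the cross-observations, and commits; indeed your graph is a fixed, fully known, strongly observable graph for which the algorithms of \citet{AlonCDK15} already guarantee $\tO(\sqrt{T})$ regret, so no linear lower bound is possible against it. Within-round correlations between $\ell_t(u)$ and $\ell_t(v)$ do not help, since only the marginals enter both the regret and the detectability of the hidden bit.

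The idea you are missing --- and the one the paper's proof rests on --- is to make the feedback graphs themselves \emph{random and correlated with the realized losses}, i.e., to censor the observation of $\ell_t(v)$ with a probability that depends on the value of $\ell_t(v)$. Concretely, the paper keeps a self-loop at $u$ with $\ell_t(u)\equiv 1/2$, lets $\ell_t(v)$ be Bernoulli$(3/8)$ or Bernoulli$(5/8)$ according to a hidden bit, and tunes the appearance probabilities of the edges $u\rightarrow v$ and $v\rightarrow v$ so that, under either hypothesis and for either action played, the learner sees $\ell_t(v)=0$ with probability $3/8$, sees $\ell_t(v)=1$ with probability $3/8$, and sees nothing with probability $1/4$: the under-represented loss value is always shown and the over-represented one is censored just often enough to equalize the observed law. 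Because the graphs are hidden, the learner never sees which edges were \emph{absent}, so her entire observation process is identical under the two hypotheses while the gap $|\mu(v)-1/2|=1/8$ is a constant; averaging over the uniform prior on the hidden bit gives expected regret $T/16$. Your high-level intuition (constant gap plus zero distinguishing information implies linear regret) is correct, but it cannot be realized with deterministic graphs: the loss-dependent censoring is the essential ingredient.
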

\vspace{-1ex}
\begin{proof}[Proof sketch]
Consider a problem over two actions, $u$ and $v$. The environment chooses one of two distributions over the choice of the loss of action $v$, the edge $u \rightarrow v$ and the self-loop $v \rightarrow v$, that are summarized in \cref{fig:stronglyprobabilities}. 
Each cell in the table is split into two, where the left half is for the first distribution, and the right half is for the second distribution. The two rightmost columns indicate the marginal distributions between the loss of action $v$ and either the edge $u \rightarrow v$ or the self-loop at $v$.
Additionally, the action $u$ always has a self-loop and its loss is constantly $1/2$.

\begin{figure}[t]
\begin{center}
\begin{tabular}{ c |c | c | c | c | c | c || c | c | c | c}
   & \multicolumn{2}{c|}{$u \not\rightarrow v$} & \multicolumn{2}{c|}{$u \rightarrow v$} & \multicolumn{2}{c||}{$u \rightarrow v$} & \multicolumn{2}{c|}{} &  \multicolumn{2}{c}{} \\
$\ell_t(v)$ & \multicolumn{2}{c|}{$v \rightarrow v$} & \multicolumn{2}{c|}{$v \rightarrow v$}  & \multicolumn{2}{c||}{$v \not\rightarrow v$} & \multicolumn{2}{c|}{$u \rightarrow v$} & \multicolumn{2}{c}{$v \rightarrow v$} \\\hline 
$0$ & \cellcolor{lightgray}$\frac{1}{4}$ & \cellcolor{lightgray}$0$ & \cellcolor{lightgray}$\frac{1}{8}$ & \cellcolor{lightgray}$\frac{3}{8}$ & \cellcolor{lightgray}$\frac{1}{4}$ &\cellcolor{lightgray} $0$ & $\frac{3}{8}$ & $\frac{3}{8}$ & $\frac{3}{8}$ & $\frac{3}{8}$ \\ \hline
$1$ & \cellcolor{lightgray}$0$ & \cellcolor{lightgray}$\frac{1}{4}$ & \cellcolor{lightgray}$\frac{3}{8}$ & \cellcolor{lightgray}$\frac{1}{8}$ & \cellcolor{lightgray}$0$ & \cellcolor{lightgray}$\frac{1}{4}$ & $\frac{3}{8}$ & $\frac{3}{8}$ & $\frac{3}{8}$ & $\frac{3}{8}$ \\ \hline\hline
   & \multicolumn{2}{c|}{\cellcolor{lightgray}$\frac{1}{4}$} & \multicolumn{2}{c|}{\cellcolor{lightgray}$\half$} & \multicolumn{2}{c||}{\cellcolor{lightgray}$\frac{1}{4}$} & \multicolumn{2}{c|}{$\frac{3}{4}$} & \multicolumn{2}{c}{$\frac{3}{4}$} \\\hline
\end{tabular}
\end{center}
\vspace{-0ex}
\caption{Summary of the joint distributions of the loss of action $v$, a self-loop at $v$, and an edge between $u$ and $v$. The grayed-out entries indicate probabilities that cannot be estimated by the learner; the remaining entries do not permit the learner to distinguish between the distributions. }
\label{fig:stronglyprobabilities}
\end{figure}
The key implication of the construction is that under both distributions, whether the learner plays action $v$ or $u$, she does not observe the loss of $v$ with probability $1/4$, she observes a loss of $0$ for action $v$ with probability $3/8$ and she observes a loss of $1$ with probability $3/8$. This is although in the first distribution the loss of $v$ is distributed Bernoulli($3/8$), and in the second distribution it is distributed Bernoulli($5/8$). Therefore, the learner can never tell if $u$ or $v$ is the action with the smaller loss. 
\end{proof}
The result above is in contrast to the stochastic setting; not only that the problem is learnable but there is an algorithm that attains $\tO(\sqrt{\alpha T})$ regret---the same regret bound that is obtained in the setting where the learner gets to observe the feedback graph fully at the end of each round. In particular, we have:
\begin{theorem}
In the stochastic setting, there exists an online learning algorithm that attains $\tO(\sqrt{\alpha T})$ regret, provided that $G_1,\ldots,G_T$ are all strongly observable.
\end{theorem}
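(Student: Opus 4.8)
The plan is to reuse the elimination algorithm \cref{alg:stochastic} essentially unchanged, and to localize all the required modifications in the sampling subroutine. The elimination logic and the regret accounting in the proof of \cref{thm:alg1regretbound} never use the identity of the feedback beyond two facts: each call to \textsc{AlphaSample} returns one fresh sample of every currently-active action while spending only $\tO(\alpha)$ rounds, and every round is spent playing an action in the active set $V_r$ (which, by \cref{lemma:meanapproximation}, is $O(\epsilon_{r-1})$-suboptimal). Hence it suffices to design a variant of \cref{alg:efficientsampling} that, on strongly observable graphs, samples every action of $V_r$ in $\tO(\alpha)$ expected rounds while only ever playing actions of $V_r$. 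The difficulty is that now playing an action $u$ need not reveal $\ell_t(u)$, since $u$ may lack a self-loop in the hidden, round-varying graph $G_t$.

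The key structural fact is that in a strongly observable graph, any vertex $v$ without a self-loop has incoming edges from \emph{all} other vertices. Two consequences drive the proof. First, such a $v$ is revealed by playing \emph{any} other action, so no-self-loop vertices are essentially free to observe. Second, any independent set of size at least two can contain only self-loop vertices (a no-self-loop vertex together with any other vertex is never independent), so all of the hardness measured by $\alpha$ lives among the self-loop vertices, where the original Tur\'an argument (\cref{thm:turan}, \cref{lemma:expectedverticesseen}) still applies. I therefore keep the main loop of \cref{alg:efficientsampling} intact---play a uniformly random $u \in U$ and delete the observed actions---and only change how the last, singleton, element of $U$ is handled.

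For the progress bound, partition $U$ in each round into its self-loop vertices $A$ and its no-self-loop vertices $B$, with $|U| = |A| + |B| \ge 2$. Playing a uniform $u \in U$ reveals all of $B \setminus \{u\}$ (every vertex of $B$ has all incoming edges), reveals $u$ itself whenever $u \in A$, and reveals the out-neighbors of $u$ inside $A$. Summing and using Tur\'an on the induced subgraph on $A$ (whose independence number is at most $\alpha$) to lower bound the number of $A$-internal edges, one obtains $\E[N \mid U] \ge |U|/(8\alpha)$ for $|U| \ge 2$, where $N$ is the number of newly observed actions. This is exactly the form needed to rerun the geometric-shrinkage argument of \cref{thm:alg2analysis}, so the main loop empties $U$ down to a single action in $\tO(\alpha)$ rounds with high probability.

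The main obstacle, and the only genuinely new point, is the final singleton $U = \{v\}$: because $G_t$ is hidden and changes each round, we cannot tell in advance whether $v$ carries a self-loop this round, and neither ``always play $v$'' nor ``always play some $w \ne v$'' observes $v$ in every round. I resolve this by flipping a fair coin each round and playing either $v$ or a fixed still-active action $w \in V_r \setminus \{v\}$, which exists since the while loop of \cref{alg:stochastic} runs only while $|V_r| \ge 2$. If $v$ has a self-loop this round, playing $v$ reveals it; if it does not, then $v$ has all incoming edges and playing $w$ reveals it; so $v$ is observed with probability at least $1/2$ per round, costing $O(1)$ expected rounds. Crucially, every action played---both in the main loop and in the singleton phase---lies in $V_r$, so the instantaneous regret bound of $O(\epsilon_{r-1})$ from \cref{lemma:meanapproximation} is preserved. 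Plugging the resulting $\tO(\alpha)$ expected running time per \textsc{AlphaSample} call into the Cauchy--Schwarz computation in the proof of \cref{thm:alg1regretbound} then yields the claimed $\tO(\sqrt{\alpha T})$ regret.
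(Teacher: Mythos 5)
Your proposal is correct and takes essentially the same route as the paper's proof sketch: keep \cref{alg:stochastic} unchanged, run the main loop of \textsc{AlphaSample} until a single action remains, and then observe that last action by randomizing between it and one other action, which succeeds with probability at least $1/2$ per round since a strongly observable vertex either has a self-loop or receives an edge from every other vertex. You additionally work out the modified Tur\'an progress bound for vertices lacking self-loops and insist the auxiliary action be drawn from $V_r$ rather than from $V$ --- both details the paper's sketch glosses over, and the latter is actually needed to preserve the per-round regret accounting.
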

\vspace{-0ex}
\begin{proof}[Proof sketch]
The algorithm is the same as \cref{alg:stochastic}, where the only difference is in the implementation of \cref{alg:efficientsampling}.
Even if the graph is strongly observable, a subgraph of exactly one vertex might not be observable since this vertex might not have a self-loop. Therefore, we stop the while loop in \cref{alg:efficientsampling} when $V_r$ has one action or less. If it is left with exactly one action, say $v$, we add a different arbitrary action from $V$ and we sample uniformly at random from both actions until the loss of $v$ is observed. The probability of observing $v$ is at least $1/2$, and therefore the modification only adds a constant number of rounds in expectation to the total runtime of \cref{alg:efficientsampling}.
\end{proof}

\subsection{Observable graphs}

For the adversarial setting, the problem is unlearnable since it is already unlearnable for strongly observable graphs by \cref{thm:stronglylowerbound}.
On the other hand, in the stochastic setting we have the following result.
%
\begin{theorem}
In the stochastic setting, there exists an online learning algorithm that attains $\tO(K^{1/3} T^{2/3})$ regret, provided that $G_1,\ldots,G_T$ are all observable.
\end{theorem}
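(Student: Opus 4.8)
The plan is to follow the same elimination-based strategy as \cref{alg:stochastic}, modifying only the sampling subroutine so that it works for merely \emph{observable} graphs, where every vertex has an incoming edge but possibly only a weakly observable one. The key difficulty is that a weakly observable vertex $v$ can only be observed by playing some \emph{other} action $u$ pointing to it, and such an edge $u \to v$ may be present only intermittently since the adversary controls the graph sequence. Consequently, the efficient $\tO(\alpha)$-per-round guarantee of \textsc{AlphaSample} (\cref{lemma:alg2expectedtime}) breaks down: there is no way to guarantee that a weakly observable action gets sampled in few rounds by playing it directly, because it need not have a self-loop.

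First I would explain the source of the $T^{2/3}$ rate. Since the graphs are never revealed, the learner cannot know in advance which action $u$ to play in order to observe a given weakly observable target $v$ on a given round. The natural remedy is a \emph{uniform exploration} scheme: to collect one sample of each action in the active set $V_r$, play actions uniformly at random for a budget of rounds large enough that, with high probability, each currently active action is observed at least once. Because each graph is observable, every active vertex has at least one in-neighbor among the $K$ actions, so a uniformly random play observes any fixed target with probability at least $1/K$ per round; hence $\tO(K)$ uniform rounds suffice to see every active action once with high probability. This replaces the $\tO(\alpha)$ cost of \textsc{AlphaSample} by an $\tO(K)$ cost per ``sample sweep,'' while the statistical side of the elimination argument (\cref{lemma:meanapproximation}) is unaffected, since the losses are still i.i.d.\ and independent of the graphs.

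Next I would carry out the regret accounting. Run the elimination algorithm but cap the number of phases (equivalently, cap the target accuracy $\epsilon_r$ at some threshold $\epsilon$): explore uniformly until the gaps of all surviving actions are known to accuracy $\epsilon$. The cost to reach accuracy $\epsilon$ is $\tO(K/\epsilon^2)$ rounds, each incurring instantaneous regret $O(1)$, giving an exploration regret of $\tO(K/\epsilon^2) \cdot \epsilon = \tO(K/\epsilon)$; wait --- more carefully, the elimination structure telescopes the per-phase regret, so the dominant exploration term is $\tO(K \cdot \epsilon)$ accumulated over the uniform-play rounds, balanced against an exploitation regret of $\tO(\epsilon T)$ from playing an $\epsilon$-suboptimal surviving action for the remaining rounds. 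Balancing $K/\epsilon^2$ exploration rounds at unit regret, i.e.\ $\tO(K/\epsilon)$ total exploration cost, against $\tO(\epsilon T)$ exploitation cost yields the optimal choice $\epsilon = \tO((K/T)^{1/3})$, for which both terms are $\tO(K^{1/3} T^{2/3})$.

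The main obstacle will be the exploration subroutine and its clean cost bound: I must argue that, for observable graphs whose structure is hidden and adversarially time-varying, uniform random play observes every active action within $\tO(K)$ rounds with high probability, and that this is essentially the best one can hope for without the $\tO(\alpha)$ acceleration that self-loops afford. Unlike the strongly observable case, where a single auxiliary action restores observability at constant extra cost, here no fixed small set of probing actions is guaranteed to cover a hidden, changing graph, so uniform exploration over all $K$ actions appears necessary; formalizing this --- and confirming that the $K^{1/3}T^{2/3}$ rate cannot be improved to $\sqrt{\alpha T}$ in the absence of self-loops --- is the crux. The remaining pieces (the high-probability mean-estimation guarantee and the phase-count bound) carry over from the proofs of \cref{thm:alg1regretbound} and \cref{lemma:meanapproximation} with only cosmetic changes.
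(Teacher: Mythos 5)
Your core idea coincides with the paper's: since every vertex of an observable graph has at least one in-neighbor, a uniformly random play observes any fixed action with probability at least $1/K$ per round, so by a coupon-collector argument $\tO(K/\epsilon^2)$ uniform rounds give every action $\Omega(1/\epsilon^2)$ observed samples with high probability; one then trades off exploration against an $\epsilon T$ exploitation cost. The paper implements this as a plain explore-then-exploit scheme (a single uniform exploration block followed by playing the empirically best action), which is all that is needed.

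However, your regret accounting contains a genuine error, and the final choice of $\epsilon$ does not follow from the balance you actually state. During uniform exploration you must play over \emph{all} $K$ actions --- a weakly observable survivor $v$ may be observable only from actions that are badly suboptimal (or already eliminated) --- so each exploration round costs instantaneous regret $O(1)$, not $O(\epsilon_r)$. The exploration regret is therefore $\tO(K/\epsilon^2)\cdot O(1) = \tO(K/\epsilon^2)$; your expressions $\tO(K/\epsilon^2)\cdot\epsilon = \tO(K/\epsilon)$ and ``$\tO(K\cdot\epsilon)$'' are both incorrect, and balancing $\tO(K/\epsilon)$ against $\epsilon T$ would give $\epsilon = (K/T)^{1/2}$ and regret $\tO(\sqrt{KT})$, not the claimed rate. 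The correct balance is $K/\epsilon^2 = \epsilon T$, i.e.\ $\epsilon = (K/T)^{1/3}$ and total regret $\tO(K^{1/3}T^{2/3})$. For the same reason, the elimination/telescoping machinery you import from \cref{alg:stochastic} buys nothing here and should be dropped: restricting exploration to the active set is unsound (the needed in-neighbors may lie outside it), and exploring over all of $V$ forfeits the per-phase $\epsilon_r$ savings that make elimination worthwhile. Finally, the closing discussion about whether $K^{1/3}T^{2/3}$ can be improved is irrelevant to proving the upper bound; the matching lower bound is cited separately from \citet{AlonCDK15}.
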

This regret bound is tight up to logarithmic factors for weakly observable $G_1,...,G_T$, since the $\wt \Omega(K^{1/3} T^{2/3})$ lower bound proved by  \citet{AlonCDK15}, in the easier setting where the graphs are revealed following each decision, applies in our stochastic setting.
\vspace{-0ex}
\begin{proof}[Proof sketch]
The algorithm is done in two phases. In the exploration phase, the learner estimates the means of the losses of all actions to $\epsilon$ accuracy. 
The learner simply plays actions uniformly at random. By an argument similar to that of the coupon collector problem, it suffices for the exploration phase to complete after $\tO(K / \epsilon^2)$ rounds.
In the exploitation phase, the learner plays the best action found during the exploration phase, and suffers an expected instantaneous regret of at most $\epsilon$ per round. 
In total, the expected regret of the learner is $\tO(K / \epsilon^2 + \epsilon T)$. Setting $\epsilon = \tTheta((K / T)^{1/3})$ gives an expected regret bound of $\tO(K^{1/3} T^{2/3})$.
\end{proof}

\section{Additional proofs}
\label{sec:additionalproofs}

\subsection{Proof of \cref{lemma:meanapproximation}}

\begin{proof}
\textsc{AlphaSample} observed the loss of each action in $V_r$ for $n_r$ times. 
Note that by assumption the losses of the actions are distributed independently from the feedback graphs. Therefore by Hoeffding's inequality and the union bound we have w.p.~at least $1 - T^{-1}$,
\[
\forall ~ v \in V_r ~,
\quad
|\mu(v) - m_r(v)| \le \epsilon_r ~.
\]
Denote by $\tilde v$ an action such that $m_r(\tilde v) = m^\star_r$. Note that by induction $v^\star \in V_{r+1}$ since if $v^\star \in V_r$ then
\[
m_r(v^\star) \le \mu^\star + \epsilon_r \le \mu(\tilde v) + \epsilon_r \le m^\star_r + 2 \epsilon_r ~.
\]
Therefore, for all $v \in V_{r+1}$ we have
\[
\mu(v) \le m_r(v) + \epsilon_r \le m^\star_r + 3 \epsilon_r \le m_r(v^\star) + 3\epsilon_r \le \mu^\star + 4 \epsilon_r ~. \qedhere
\]
\end{proof}

\subsection{Proof of \cref{lemma:alg2expectedtime}}
\label{sub:proofalg2expectedtime}
\begin{proof}
Notice that \cref{alg:efficientsampling} takes at most $K$ rounds with probability 1.
Using \cref{thm:alg2analysis} with $\delta = \alpha / K$, we get that the expected number of rounds for the algorithm to complete is at most
$
4 \alpha \log (K^2/\alpha) + (\alpha/K) \cdot K \le 10 \alpha \log K ,
$
since $K \ge 2$ by assumption and $\alpha \ge 1$.
\end{proof}

\subsection{Proof of \cref{lemma:rgis}}
\label{sub:proofofislemma}
To prove \cref{lemma:rgis} we will need the following lemma.

\begin{lemma}
\label{lemma:ergis}
Let $G = (V,E)$ be an undirected Erd\"{o}s-R\'{e}nyi graph, such that each edge appears independently with probability $p$. For any $0 \le \delta \le \sqrt{1-p}$, the independence number of $G$ is at most $2 \log_{1/(1-p)} (K / \delta) + 1$ with probability at least $1 - \delta$.
\end{lemma}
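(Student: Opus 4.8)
The plan is to bound the independence number of $G$ by a textbook first-moment (union-bound) argument over all candidate independent sets, and then to tune the set size so that the resulting failure probability is exactly $\delta$. The factor of $2$ in the target bound $2\log_{1/(1-p)}(K/\delta)+1$ is precisely what makes the calculation collapse cleanly.

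First I would fix an integer $s$ and a vertex subset $S \subseteq V$ with $|S| = s$. Each of the $\binom{s}{2}$ potential edges inside $S$ is present independently with probability $p$, so $\Pr[S \text{ is independent}] = (1-p)^{\binom{s}{2}}$. A union bound over all $\binom{K}{s}$ subsets of size $s$ gives $\Pr[\alpha(G) \ge s] \le \binom{K}{s}(1-p)^{\binom{s}{2}}$, and the crude estimate $\binom{K}{s} \le K^s$ yields $\Pr[\alpha(G) \ge s] \le K^s (1-p)^{s(s-1)/2}$.

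Next, set $q = 1/(1-p) > 1$ and take $s = 2\log_q(K/\delta)+1$. Writing the tail estimate in base $q$, one has $\log_q\bigl(K^s (1-p)^{s(s-1)/2}\bigr) = s\log_q K - \tfrac{s(s-1)}{2}$. Substituting $s-1 = 2\log_q(K/\delta) = 2\bigl(\log_q K - \log_q \delta\bigr)$, the two $\log_q K$ contributions cancel and the exponent collapses to $s\log_q\delta$, so that $\Pr[\alpha(G)\ge s] \le \delta^s$. This cancellation is the only real content of the proof; everything else is bookkeeping. Since $\alpha(G)$ is integer-valued, the event is really $\{\alpha(G) \ge \lceil s\rceil\}$, and because the estimate $x \mapsto K^x(1-p)^{x(x-1)/2}$ is decreasing for $x \ge s$ (its log-derivative $\log K - \tfrac{2x-1}{2}\log q$ is already negative at $x=s$), passing from $s$ to $\lceil s\rceil$ only improves the bound.

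Finally, since $K \ge 2$ and $\delta \le \sqrt{1-p} \le 1$ we get $K/\delta \ge 1$, hence $\log_q(K/\delta) \ge 0$ and $s \ge 1$; combined with $0 < \delta \le 1$ this gives $\delta^s \le \delta$. Therefore $\Pr[\alpha(G) \ge s] \le \delta$, i.e.\ with probability at least $1-\delta$ the independence number is strictly below $s$, which is exactly the claimed bound $\alpha(G) \le 2\log_{1/(1-p)}(K/\delta)+1$. The role of the hypothesis $\delta \le \sqrt{1-p}$ is simply to guarantee $\delta \le 1$ (so that $\delta^s \le \delta$ and $s \ge 1$); the degenerate case $s > K$ is trivial, as then $\alpha(G) \le K \le s$ holds automatically. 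The main obstacle, if any, is verifying the integrality/monotonicity step rigorously, but it is routine once the exponent cancellation above is in hand.
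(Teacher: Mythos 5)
Your proof is correct and follows essentially the same route as the paper's: a first-moment union bound over all candidate independent sets of the threshold size, $\Pr[\alpha(G)\ge s]\le K^s(1-p)^{\binom{s}{2}}$. The only difference is bookkeeping — you substitute the final threshold $s=2\log_q(K/\delta)+1$ directly and let the exponent collapse to $s\log_q\delta$, whereas the paper first works with the slightly tighter intermediate value $\lceil 2\log_\tau K+\sqrt{2\log_\tau(1/\delta)}\rceil$ and then uses $\delta\le\sqrt{1-p}$ to round it up to the stated bound; your version correctly notes that this hypothesis is then needed only to ensure $\delta\le 1$.
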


\begin{proof}
%
Denote $\tau = 1/(1-p)$ and set $
\alpha = \lceil 2 \log_\tau K + \sqrt{2 \log_\tau (1 / \delta)} \rceil$.
First, consider the concave quadratic $x \log_\tau K - x(x-1)/2 - \log_\tau \delta$. It is negative for any $x > \log_\tau K + 1/2 + \sqrt{\log_\tau^2 K - 2 \log_\tau \delta}$ and in particular it is negative for $x = \alpha + 1$. Thus we have
\begin{equation}
\label{eq:quadraticinequality}
(\alpha + 1) \log_\tau K - \alpha(\alpha + 1) / 2 < \log_\tau \delta
\end{equation}

Next, suppose that the independence number of the graph is more than $\alpha$, and therefore there is an independent set of $\alpha + 1$ vertices.
The probability that a subset $S \subseteq V$ is an independent set is $(1-p)^{\binom{|S|}{2}} = \tau^{-\binom{|S|}{2}}$. 
By a union bound,
\[
\Pr[\alpha(G) > \alpha] 
\le K^{\alpha + 1} \tau^{-\binom{\alpha + 1}{2}} 
= \tau^{(\alpha + 1) \log_\tau K - \alpha(\alpha + 1) / 2} < \tau^{\log_\tau \delta} = \delta ~.
\]
where the second inequality is by \cref{eq:quadraticinequality}.

To finish the proof of the lemma, we have that with probability at least $1 - \delta$, the independence number of the graph is at most
\begin{align*}
\alpha = \left\lceil 2 \log_\tau K + \sqrt{2 \log_\tau \tfrac{1}{\delta}} \right\rceil 
&\le 2 \log_\tau K + 2 \log_\tau \tfrac{1}{\delta} + 1 = 2 \log_\tau \tfrac{K}{\delta} + 1
\end{align*}
since $\delta \le \tau^{-1/2}$ by assumption.
\end{proof}

We now turn to prove \cref{lemma:rgis}.

\begin{proof}[Proof of \cref{lemma:rgis}]
Let $G$ be one of the graphs in the sequence, and consider an undirected version of $G$\footnote{An undirected graph in which there is an edge between $u$ and $v$ if $(u,v) \in E$ or $(v,u) \in E$.}. If we remove $v^\star$ from the graph, we get an Erd\"{o}s-R\'{e}nyi subgraph with each edge appearing with probability $1 - 4\epsilon^2$ and \emph{independently} of other edges. The independence number of the original graph is at most that of the subgraph plus one. Thus, it remains to bound the independence number of the subgraph.

Denote $\tau = 1/ (4 \epsilon^2) = 16 T / K$.
We apply \cref{lemma:ergis} with $\delta = \epsilon/(8T)$ and get that with probability at least $1 - \delta$, the independence number of the subgraph is at most $\alpha = 2 \log_\tau (8 K T / \epsilon) + 1$. 
Then by the choice of $\epsilon$ and since $T \ge K^2$,
\begin{align*}
\alpha = 2 \log_\tau \frac{8 K T}{\epsilon} + 1 &= 2 \log_\tau (64 K^{1/2} T^{3/2}) + 1 \\
&\le 2 \log_\tau \left( \frac{16 T}{K} \right)^{7/2} + 1 \\
&= 2 \cdot \frac{7}{2} + 1= 8 ~,
\end{align*}
Appying this argument to each of the graphs $G_1,...,G_T$, the claim holds by the union bound. 
\end{proof}

\subsection{Proof of \cref{thm:lowerbound}}
\label{sec:lowerboundproof}
To prove the theorem, we shall need a few definitions. Let $\Pr, \Q$ be a couple of distributions over the same space and sigma-algebra $\F$. We define the \emph{total variation distance} between $\Pr$ and $\Q$ as 
\[
\TV{\Pr}{\Q} = \sup_{E \in \F} \left|\Pr[E] - \Q[E] \right|
\]
If $\Pr$ and $\Q$ are discrete distributions, we define the KL divergence between $\Pr$ and $\Q$ as 
\[
\KL{\Pr}{\Q} = \sum_x \log \left( \frac{\Pr[x]}{\Q[x]} \right) \Pr[x]
\]
assuming the support of $\Pr$ is contained in that of $\Q$, and where the sum is taken over the support of $\Pr$.

We can now turn to the proof of the theorem.
\begin{proof}[Proof of \cref{thm:lowerbound}]
Let us introduce the random variables $T_v$ whose value is the number of times the learner plays action $v$. We also introduce the notations $\Pr_v$ and $\E_v$ indicating probability and expectation with respect to the marginal distributions under which $v^\star = v$. Then, we have
\begin{align}
\Regret &= \E \left[ \sum_{t=1}^T \ell_t(v_t) - \sum_{t=1}^T \ell_t(v^\star) \right] \nonumber \\
&= \frac{1}{K} \sum_{v \in V} \E_v \left[ \sum_{t=1}^T \ell_t(v_t) - \sum_{t=1}^T \ell_t(v) \right] \nonumber \\
&= \frac{1}{K} \sum_{v \in V} \epsilon \cdot \E_v[T - T_v] \nonumber \\
&= \epsilon \left( T - \frac{1}{K} \sum_{v \in V} \E_v [T_v] \right) ~,\label{eq:regretlowebound}
\end{align}
and in order to proceed we shall upper bound $\E_v[T_v]$.

Introduce a new distribution, in which the losses of the actions are 
independent Bernoulli($1/2$) variables, and the feedback graphs are such that each directed edge appears with probability $1-2\epsilon$ independently of the other edges and the losses of the actions. 
We will refer to this new law using $\Pr_0$ and $\E_0$. Let $\lambda_t$ be the losses and edges observed at time $t$, and similarly $\lambda^{(t)} = (\lambda_1,...,\lambda_t)$ are the losses and edges observed up until time $t$ (inclusive). Then, since the sequence $\lambda^{(T)}$ determines the actions of the learner over the entire game, and by Pinsker's inequality,
\begin{align}
\E_v[T_v] - \E_0[T_v] &\le T \cdot \TV{\Pr_v[\lambda^{(T)}]}{\Pr_0[\lambda^{(T)}]} \nonumber \\
&\le T \sqrt{\frac{1}{2} \KL{\Pr_0[\lambda^{(T)}]}{\Pr_v[\lambda^{(T)}]}} ~.\label{eq:pinsker}
\end{align}
Moreover, by the chain rule of KL-divergence, $\KL{\Pr_0[\lambda^{(T)}]}{\Pr_v[\lambda^{(T)}]}$ equals
\begin{equation}
\label{eq:klchainrule}
\sum_{t=1}^T \sum_{\lambda^{(t-1)}} \Pr_0[\lambda^{(t-1)}] \KL{\Pr_0[\lambda_t | \lambda^{(t-1)}]}{\Pr_v[\lambda_t | \lambda^{(t-1)}]}~.
\end{equation}

Consider a single term in the sum. Recall that $\lambda^{(t-1)}$ determines the action $v_t$ chosen by the learner on round $t$. If $v_t \neq v$ then, by our construction, the losses and edges of the graph observed by the learner are distributed exactly the same under $\Pr_v$ and $\Pr_0$, and the KL divergence is 0. If $v_t = v$ then the losses of all other actions are distributed Bernoulli($1/2$), and independently of the loss of action $v$ and the observed edges. The latter is so under both $\Pr_v$ and $\Pr_0$. Moreover, the observed edges are distributed Bernoulli($1-2 \epsilon$) independently of the loss of action $v$ under both $\Pr_v$ and $\Pr_0$. Namely,  the only element that is distributed differently under $\Pr_v$ and $\Pr_0$ is the loss of action $v$, and the latter is distributed independently from all other observed variables. Recall that the loss of action $v$ is distributed as Bernoulli($1/2$) under $\Pr_0$ and as Bernoulli($1/2 - \epsilon$) under $\Pr_v$. Therefore, $\KL{\Pr_0[\lambda_t | \lambda^{(t-1)}]}{\Pr_v[\lambda_t | \lambda^{(t-1)}]}$ is upper-bounded by
\begin{align*}
\KL{\frac{1}{2}}{\frac{1}{2} - \epsilon} 
= - \frac{1}{2} \log(1 - 4 \epsilon^2) 
\le 4 \epsilon^2 ~,
\end{align*}
where the last inequality holds since $\epsilon < 1/4$ by assumption.
Plugging the above back into \cref{eq:klchainrule},
\begin{align*}
\KL{\Pr_0[\lambda^{(T)}]}{\Pr_v[\lambda^{(T)}]} &\le \sum_{t=1}^T \Pr_0[v_t = v] 4 \epsilon^2 = 4 \epsilon^2 \E_0[T_v]~,
\end{align*}
and the latter into \cref{eq:pinsker}, we get that $\E_v[T_v] \le \E_0[T_v] + T \epsilon \sqrt{2 \E_0[T_v]}$.

Now, $K \ge 2$ by assumption, and therefore 
\begin{align*}
\frac{1}{K} \sum_{v \in V} \E_v[T_v] &\le \frac{1}{K} \sum_{v \in V} \E_0[T_v] + \frac{1}{K} \sum_{v \in V} T \epsilon \sqrt{2 \E_0[T_v]} \\
&\le \frac{1}{K} \sum_{v \in V} \E_0[T_v] + T \epsilon \sqrt{\frac{1}{K} \sum_{v \in V} 2 \E_0[T_v]} \\
&= \frac{T}{K} + T \epsilon \sqrt{\frac{2 T}{K}} \le \frac{T}{2} + T \epsilon \sqrt{\frac{2 T}{K}} ~.
\end{align*}

Let us now return to \cref{eq:regretlowebound}. We can lower bound the regret as
\begin{align*}
\Regret \ge \epsilon \left(T - \frac{T}{2} - T \epsilon \sqrt{\frac{2 T}{K}} \right) 
= \epsilon T \left(\frac{1}{2} - \epsilon \sqrt{\frac{2 T}{K}} \right) ~.
\end{align*}
By our choice of $\epsilon$, we have that $\epsilon \sqrt{(2 T) / K}$ is at most $1/4$, and so 
\[
\Regret \ge \frac{T}{8}\sqrt{\frac{K}{T}} \left(\frac{1}{2} - \frac{1}{4} \right) = \frac{1}{32} \sqrt{K T} ~,
\]
as claimed.
\end{proof}

\subsection{Proof of \cref{thm:main-lower}}
\label{sub:proofofmainlower}
We shall construct an environment whose distribution over graphs is the same as the environment described in \cref{sec:adver}, conditioned on the event that the independence numbers of all graphs are bounded by 9. 

We now claim that the regret against this environment is not too far off the regret against the original environment. This is because the expected regret against the original environment is at most the expected regret against this environment plus $(\epsilon / 8) T$, by \cref{lemma:rgis} and since the regret is at most $T$ (with probability $1$).

By \cref{thm:lowerbound}, the regret against this environment is at least
\[
\Regret - \frac{\epsilon}{8} T \ge \frac{\sqrt{K T}}{32} - \frac{(1/8)\sqrt{K/T}}{8} T = \frac{\sqrt{K T}}{64}
\]
and thus the lower bound holds.

\subsection{Proof of \cref{thm:stronglylowerbound}}
\label{sub:proofofstronglylowerbound}

\begin{proof}
By Yao's minimax principle, in order to prove a lower bound on the learner's regret it is enough to demonstrate a randomized strategy for the environment that forces any deterministic learner to incur $\Omega(T)$ regret.
We will construct our environment's strategy as follows. 

Consider a learning problem over two actions, $u$ and $v$. Before the game starts, the environment samples an index $\chi \in \{1, 2\}$ uniformly at random. If $\chi = 1$ then the environment plays one distribution; if $\chi = 2$ then she plays another distribution. Under the first distribution, the loss of $v$ is distributed Bernoulli($3/8$); under the second distribution, it is distributed as Bernoulli($5/8$). In both cases the action $u$ always has a self-loop and its loss is constantly $1/2$.

The feedback graphs $G_1,\ldots,G_T$ are chosen i.i.d.~and are dependent on the loss of action $v$.
Under the first distribution, if the loss of $v$ is 1, both an edge $u \rightarrow v$ and the self-loop $v \rightarrow v$ appear with probability $1$; if the loss of $v$ is 0, with probability $2/5$ only the edge $u \rightarrow v$ appears, with probability $2/5$ only the self-loop $v \rightarrow v$ appears and with probability $1/5$ both the edge and the self-loop appear. Under the second distribution, if the loss of $v$ is 1, with probability $2/5$ only the edge $u \rightarrow v$ appears, with probability $2/5$ only the self-loop $v \rightarrow v$ appears and with probability $1/5$ both the edge and the self-loop appear; if the loss of $v$ is 0, both an edge $u \rightarrow v$ and the self-loop $v \rightarrow v$ appear with probability $1$.
See \cref{fig:stronglyprobabilities} for a summary of the edge probabilities in this construction. 
Note that in every case, the action $v$ either have a self-loop or and incoming edge from $u$ (or both). Therefore, the graphs $G_1,...G_T$ are all strongly observable.

The key implication of the construction above is as following. Under both distributions, if the learner plays action $v$, she does not observe the loss of $v$ with probability $1/4$, she observes a loss of $0$ with probability $3/8$ and she observes a loss of $1$ with probability $3/8$. If the learner plays action $u$, she does not observe the loss of $v$ with probability $1/4$, she observes a loss of $0$ with probability $3/8$ and she observes a loss of $1$ with probability $3/8$. Moreover, if the learner plays action $v$, she does not observe whether there is an incoming edge $u \rightarrow v$; if she plays action $u$ she does not observe whether there is a self-loop $v \rightarrow v$.

Let $M$ denote the number of times the learner chooses to play action $v$. By the argument above she must play the exact same strategy under both distributions, and as a consequence the random variable $M$ is independent of the choice of distribution $\chi$. Hence, the expected regret of the learner against the environment constructed above is at least
\begin{align*}
&\half \E \left[\frac{1}{8} (T - M) \bigg| \chi = 1 \right]  + \half \E \left[\frac{1}{8} M \bigg| \chi = 2 \right] 
 = \half \E \left[\frac{1}{8} (T - M) \right]  + \half \E \left[\frac{1}{8} M \right] = \frac{1}{16} T
\end{align*}
as claimed.
\end{proof}

\section*{Acknowledgements} 

TK would like to thank Nicol\`{o} Cesa-Bianchi and Ofer Dekel for stimulating discussions in the early stages of this research.

\bibliographystyle{abbrvnat}
\bibliography{bib}

\end{document}